\def\eqref#1{equation~\ref{#1}}
\def\1{\bm{1}}
\DeclareMathAlphabet{\mathsfit}{\encodingdefault}{\sfdefault}{m}{sl}
\SetMathAlphabet{\mathsfit}{bold}{\encodingdefault}{\sfdefault}{bx}{n}
\DeclareMathOperator*{\argmax}{arg\,max}
\title{Infinite-horizon Off-Policy Policy Evaluation with Multiple Behavior Policies}
\author{Xinyun Chen$^1$, Lu Wang$^2$, Yizhe Hang$^3$, Heng Ge$^4$ \& Hongyuan Zha$^5$\thanks{Hongyuan zha is on leave from Georgia Institute of technology. Part of this work was done while lu wang, yizhe hang were visiting the Chinese university of hong kong, Shenzhen} \\
\\$^1$Insitute for Data and Decision Analytics, The Chinese University of Hong Kong, Shenzhen
\\$^2$Department of Computer Science, East China Normal University
\\$^3$Department of Computer Science, University of Science and Technology of China
\\$^4$ School of Mathematics and Statistics, Shandong University
\\$^5$Insitute for Data and Decision Analytics, The Chinese University of Hong Kong, Shenzhen \&\\
Georgia Institute of Technology
\\
$^1$\texttt{chenxinyun@cuhk.edu.cn},
$^2$\texttt{luwang@stu.ecnu.edu.cn},\\
$^3$\texttt{hangyhan@mail.ustc.edu.cn},
$^4$\texttt{hengge@mail.sdu.edu.cn},\\
$^5$\texttt{zhahy@cuhk.edu.cn} 
}
\theoremstyle{plain}
\newtheorem{thm}{Theorem}
\newtheorem{proposition}{Proposition}
\newtheorem{corollary}{Corollary}
\newtheorem{assumption}{Assumption}
\begin{document}

\maketitle

\begin{abstract}
We consider off-policy policy evaluation when the trajectory data are generated by multiple behavior policies.
Recent work has shown the key role played by the state or state-action stationary distribution corrections in the
infinite horizon context for off-policy policy evaluation. We propose estimated mixture policy (EMP), a novel class of partially policy-agnostic methods to accurately estimate those quantities. With careful analysis, we show that EMP gives rise to
estimates with reduced variance for estimating the state stationary distribution correction while it also offers a useful induction bias
for estimating the state-action stationary distribution correction. In extensive experiments with both continuous and discrete environments, we demonstrate that our algorithm offers significantly improved accuracy compared to the state-of-the-art methods.
\end{abstract}
\section{Introduction}
In many real-world decision-making scenarios, evaluating a novel policy by directly executing it in the environment is generally costly and can even be downright risky. Examples include evaluating a recommendation policy~\citep{SwaminathanKADL17,zheng2018drn}, a treatment policy~\citep{hirano2003efficient,murphy2001marginal}, and a traffic light control policy~\citep{van2016coordinated}. Off-policy policy evaluation methods (OPPE) utilize a set of previously-collected trajectories (for example, website interaction logs, patient trajectories, or robot trajectories) to estimate the value of a novel decision-making policy without interacting with the environment~\citep{precup2001off,DudikLL11}. For many reinforcement learning applications, the value of the decision is defined in a long- or infinite-horizon, which makes OPPE more challenging.


The state-of-the-art methods for infinite-horizon off-policy policy evaluation rely on learning \textit{(discounted) state stationary distribution corrections} or \textit{ratios}. In particular, for each state in the environments, these methods estimate the likelihood ratio of the long-term probability measure for the state to be visited in a trajectory generated by the \textit{target policy}, normalized by the probability measure generated by the \textit{behavior policy}. This approach can effectively avoid the exponentially high variance compared to the more classic importance sampling (IS) estimation methods~\citep{precup2000eligibility,DudikLL11,hirano2003efficient,wang2017optimal,murphy2001marginal}, especially for infinite-horizon policy evaluation~\citep{liumethod,dualdice,hallak2017consistent}. However, learning state stationary distribution requires detailed information on distributions of the behavior policy, and we call them \textit{policy-aware} methods. As a consequence, policy-aware methods are difficult to apply when off-policy data are pre-generated by multiple behavior policies or when the behavior policy's form is unknown. To address this issue, \citet{dualdice} proposes a \textit{policy-agnostic} method, DualDice, which learns the joint state-action stationary distribution correction that is much higher dimension, and therefore needs more model parameters than the state stationary distribution. Besides, there is no theoretic comparison between policy-aware and policy-agnostic methods. 

 In this paper, we propose a \textit{partially policy-agnostic} method, EMP (estimated mixture policy) for infinite-horizon off-policy policy evaluation with multiple known or unknown behavior policies. EMP is partially policy-agnostic in the since that it does not necessarily require knowledge of the individual behavior policies. Instead, it involves a pre-estimation step to estimate a single mixed policy that will be defined formally later. Like the method in~\citet{liumethod}, EMP also learns the state stationary distribution correction, so it remains computationally cheap and is scalable in terms of the number of behavior policies. Inspired by~\citet{hanna2019importance}, we construct a theoretical bound for the mean square error (MSE) of the stationary distribution corrections learned by EMP. In particular, we show that in the single-behavior policy setting, EMP yields smaller MSE than the policy-aware method. On the other hand, compared to DualDice, EMP learns the state stationary distribution correction of smaller dimension, more importantly the estimation of the mixture policy can be considered as an inductive bias as far as the stationary distribution correction is concerned, and hence could achieve better performance when the pre-estimation is not expensive. In addition, we propose an ad-hoc improvement of EMP, whose theoretical analysis is left for future studies. EMP is compared with both policy-aware and policy-agnostic methods in a set of continuous and discrete control tasks and shows significant improvement.

\section{Background and Related Work}
\subsection{Infinite-horizon Off-policy Policy Evaluation}
We consider a Markov Decision Process (MDP) and our goal is to estimate the infinite-horizon \textit{average reward}. The environment is specified by a tuple $\mathcal{M} =\left\langle S, A, R, T\right\rangle$, consisting of a state space, an action space, a reward function, and a transition probability function. A policy $\pi$ interacts with the environment iteratively, 
starting with an initial state $s_0$. At step $n = 0, 1,...$ , the policy produces a distribution $\pi(\cdot|s_n)$ over the actions $A$, from which an action $a_n$ is sampled and applied to the environment. The environment stochastically produces a scalar reward $r(s_n, a_n)$ and a next state $s_{n+1}\sim T(\cdot|s_n,a_n)$. The infinite-horizon average reward under policy $\pi$ is 
$$R_\pi = \lim_{N\to\infty}\frac{1}{N+1}\sum_{n=0}^N {\cal E} \left[r(s_n,a_n)\right].$$

Without gathering new data, off-policy policy evaluation (OPPE) considers the problem of estimating the expected reward of a target policy $\pi$ via a pre-collected state-action-reward tuples from policies that are different from $\pi$, which are called behavior policies. In our paper, we consider the general setting that the data are generated by multiple behavior policies $\pi_j (j=0,1,..,m)$. Most OPPE literature has focused on the single-behavior-policy case where $m=1$. In this case, we denote the behavior policy by $\pi_0$. Roughly speaking, most OPPE methods can be grouped into two categories: importance-sampling(IS) based OPPE and stationary-distribution-correction based OPPE. 

\subsection{Importance Sampling Policy Evaluation Using Exact and Estimated Behavior Policy}

As for short-horizon off-policy policy evaluation, importance sampling policy evaluation (IS) methods~\citep{precup2001off,DudikLL11,SwaminathanKADL17,PrecupSS00,horvitz1952generalization} have shown promising empirical results. The main idea of importance sampling based OPPE is using importance weighting $\pi/\pi_j$ to correct the mismatch between the target policy $\pi$ and the behavior policy $\pi_j$ that generates the trajectory. 

\cite{li2015toward} and \cite{hanna2019importance} show that using estimated behavior policy in the importance weighting can obtain importance sampling estimation with smaller mean square error (MSE). EMP also uses estimated policy, but there are two key difference between EMP and the previous works: (1) EMP is not an IS-based method, it involves a min-max problem; (2) EMP focuses on multiple-behavior-policy setting while previous works have focused on single-behavior setting.

\subsection{Policy Evaluation via Learning Stationary Distribution Correction}
The state-of-the-art methods for long-horizon off-policy policy evaluation are stationary-distribution-correction based~\citep{liumethod,dualdice,hallak2017consistent}. Let $d_{\pi_0}(s)$ and $d_\pi(s)$ be the stationary distribution of state $s$ under the behavior policy $\pi_0$ and target policy $\pi$ respectively. The main idea is directly applying importance weighting by $\omega = d_\pi/d_{\pi_0}$ on the stationary state-visitation distributions to avoid the exploding variance suffered from IS, and estimate the average reward as
$$R_\pi=\mathbb{E}_{(s,a)\sim d_{\pi}}[r(s,a)]=\mathbb{E}_{(s,a)\sim d_{\pi_0}}\left[\omega(s)\cdot \frac{\pi(a|s)}{\pi_0(a|s)}r(s,a)\right].$$
For example, \citet{liumethod} uses min-max approach to estimate $\omega$ directly from the data. This class of methods require exact knowledge of behavior policy $\pi_0$ and are not straightforward to apply in multiple-behavior-policy setting. Recently, \citet{dualdice} proposes DualDice to overcome such limitation by learning the state-action stationary distribution correction $\omega(s,a) = d_\pi(s)\pi(a|s)/d_{\pi_0}(s)\pi_0(a|s)$.

\section{Single Behavior Policy}\label{sec: single}

We first consider the task of stationary distribution correction learning in the simple case where the data are generated by a single behavior policy as previous state stationary distribution correction methods. To explain the min-max problem formulation of the learning task, we first breifly review the method introduced by \cite{liumethod} in Section \ref{sec: BCD known}, which we shall refer as the BCH method in the rest of the paper. In Section \ref{sec: BCD estimated}, we show that it is beneficial to replace the exact values of the behavior policy in the min-max problem by their estimated values in two folds. First, this extends the method to application setting where the behavior policy is unknown. Second, even when the behavior policy is known with exact values, we prove that the stationary distribution correction learned by the min-max problem with estimated behavior policy has smaller MSE. We will deal with multiple-behavior-policy cases in Section \ref{sec: mis}.

\subsection{Learning Stationary Distribution Correction with Exact Behavior Policy}\label{sec: BCD known}
Assume the data, consisting of state-action-next-state tuples, are generated by a single behavior policy $\pi_0$, i.e. $\mathcal{D}=\{(s_n,a_n,s'_n):n=1,2,...,N\}$. Recall that $d_{\pi_0}$ and $d_\pi$ are the stationary state distribution under the behavior and target policy respectively, and $\omega=d_\pi/d_{\pi_0}$ is the \textit{stationary distribution correction}. In the rest of Section \ref{sec: single}, by slight notation abusion, we also denote $d_{\pi}(s,a)= d_{\pi}(s)\pi(a|s)$, $d_{\pi_0}(s,a)= d_{\pi_0}(s)\pi_0(a|s)$ and $d_{\pi_0}(s,a,s')= d_{\pi_0}(s)\pi_0(a|s)T(s'|a,s)$.  

We briefly review the BCH method proposed by~\cite{liumethod}. As $d_\pi(s)$ is the stationary distribution of $s_n$ as $n \to \infty$ under policy $\pi$, it follows that:
\begin{equation}\label{eq:stationary}
\begin{aligned}
d_\pi(s') &= \sum_{s,a} d_\pi(s)\pi(a|s)P(s'|s,a)=\sum_{s,a}\omega(s)\frac{\pi(a|s)}{\pi_0(a|s)}d_{\pi_0}(s)\pi_0(a|s)T(s'|a,s),\quad \forall s'.
\end{aligned}
\end{equation}
Therefore, for any function $f:S\to\mathbb{R}$,
\begin{equation*}
\sum_{s'}\omega(s')d_{\pi_0}(s')f(s')=\sum_{s,a,s'}\omega(s)\frac{\pi(a|s)}{\pi_0(a|s)}d_{\pi_0}(s)\pi(a|s)T(s'|a,s)f(s').
\end{equation*}
Recall that $d_{\pi_0}(s,a,s')= d_{\pi_0}(s)\pi_0(a|s)T(s'|a,s)$, so $\omega$ and the data sample satisfy the following equation
\begin{equation*}
\mathbb{E}_{(s,a,s')\sim d_{\pi_0}}\left[\left(\omega(s')-\omega(s)\frac{\pi(a|s)}{\pi_0(a|s)}\right)f(s')\right]=0, \text{ for all }f.
\end{equation*}
BCH solves the above equation via the following min-max problem:
\begin{equation}\label{eq: BCD equation}
\min_{\omega}\max_f~ \mathbb{E}_{(s,a,s')\sim{d_{\pi_0}}}\left[\left(\omega(s')-\omega(s)\frac{\pi(a|s)}{\pi_0(a|s)}\right)f(s')\right]^2,
\end{equation}
and use \textit{kernel method} to solve $\omega$. The derivation of kernel method are put in Appendix \ref{appdx: kernel}. 

\subsection{Learning Stationary Distribution Correction with Estimated Behavior Policy}\label{sec: BCD estimated}
The objective function in the min-max problem (\ref{eq: BCD equation}), evaluated by data sample, can be viewed as a one-step importance sampling estimation. As shown in \cite{hanna2019importance}, importance sampling with estimated behavior policy has smaller MSE. Motivated by this fact and the heuristic that better objective function evaluation will lead to more accurate solution, we show that the BCH method can also be improved by using estimated behavior policy and obtain smaller asymptotic MSE. We will use this result to build theoretic guarantee for the performance of EMP method in Section \ref{sec: mis}. 

To formally state the theoretic result, we need introduce more notation. Assume that we are given a class of stationary distribution correction $\Omega =\{\omega(\eta;s):\eta\in \mathcal{E}_\eta\}$, and there exists $\eta_0\in \mathcal{E}_\eta$ such that the true distribution correction $\omega(s) = \omega(\eta_0;s) $. Let $\omega(\tilde{\eta};s)$ be the stationary distribution correction learned by the min-max problem (\ref{eq: BCD equation}) and $\omega(\hat{\eta};s)$ be that learned by a min-max problem using estimated policy:
\begin{equation}\label{eq: BCD estimated}
\min_{\omega}\max_f~ \mathbb{E}_{(s,a,s')\sim\mathcal{D}}\left[\left(\omega(s')-\omega(s)\frac{\pi(a|s)}{\hat{\pi}_0(a|s)}\right)f(s')\right]^2.
\end{equation}
The intuition is that the value of $\hat{\pi}_0$ is estimated from the data sample and appears in the denominator, as a result, it could cancel out a certain amount of random error in data sample. We use a maximum likelihood method to estimate the behavior policies for discrete and continuous control tasks. The details are in Appendix~\ref{appdix:esti}. Based on the proof techniques in \cite{henmi2007}, we establish the following theoretic guarantee that using estimated behavior policy yields better estimates of the stationary distribution correction.
\begin{thm}\label{thm: BCD estimated}
	Under some mild conditions, we have, asymptotically 
	$$E[(\hat{\eta}-\eta_0)^2]\leq E[(\tilde{\eta}-\eta_0)^2].$$ 
\end{thm}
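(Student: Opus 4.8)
\emph{Proof plan.} The plan is to express both $\omega(\tilde\eta;\cdot)$ and $\omega(\hat\eta;\cdot)$ as $Z$-estimators solving estimating equations that differ only in whether the true behavior policy $\pi_0$ or its maximum-likelihood estimate $\hat\pi_0$ is plugged in, and then to run the ``projected estimating function'' argument of \citet{henmi2007}: replacing a nuisance parameter by its MLE replaces the relevant influence function by its projection onto the orthogonal complement of the nuisance score, which can only shrink the asymptotic variance. First I would write $\pi_0(\cdot\mid s)=\pi_\theta(\cdot\mid s)$ with truth $\theta_0$ and MLE $\hat\theta$ fitted from $\mathcal D$, solve the inner maximization in (\ref{eq: BCD equation}) and (\ref{eq: BCD estimated}) in closed form via the kernel method of Appendix~\ref{appdx: kernel}, and take the first-order condition in $\eta$; after replacing the resulting $V$-statistic by its H\'ajek projection this becomes an i.i.d.\ estimating equation $\frac1N\sum_n\psi(\eta,\theta;s_n,a_n,s'_n)=0$ with $E_{d_{\pi_0}}[\psi(\eta_0,\theta_0)]=0$, in which $\theta$ enters only through the ratio $\pi(a\mid s)/\pi_\theta(a\mid s)$. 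Then $\tilde\eta$ solves this at $\theta=\theta_0$ and $\hat\eta$ at $\theta=\hat\theta$.

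Next I would carry out the usual $M$-estimation expansion. Setting $A=E[\partial_\eta\psi(\eta_0,\theta_0)]$ and $B=E[\partial_\theta\psi(\eta_0,\theta_0)]$, letting $S_\theta=\nabla_\theta\log\pi_\theta(a\mid s)$ be the behavior-policy score and $\mathcal I=E[S_\theta S_\theta^{\top}]$ its information, and using the MLE expansion $\sqrt N(\hat\theta-\theta_0)=\mathcal I^{-1}\frac1{\sqrt N}\sum_n S_\theta+o_p(1)$, one obtains
\begin{align*}
\sqrt N(\hat\eta-\eta_0)&=-A^{-1}\tfrac1{\sqrt N}\textstyle\sum_n\big[\psi(\eta_0,\theta_0)+B\,\mathcal I^{-1}S_\theta\big]+o_p(1),\\
\sqrt N(\tilde\eta-\eta_0)&=-A^{-1}\tfrac1{\sqrt N}\textstyle\sum_n\psi(\eta_0,\theta_0)+o_p(1).
\end{align*}
The heart of the argument is the \emph{information identity} $B=-E[\psi(\eta_0,\theta_0)\,S_\theta^{\top}]$: because $\theta$ sits only in $1/\pi_\theta(a\mid s)$ in the denominator and the data expectation is under the fixed true law $d_{\pi_0}$, differentiating $E_{d_{\pi_0}}[\psi(\eta_0,\theta)]$ in $\theta$ reproduces $-E[\psi\,S_\theta^{\top}]$ up to terms that vanish by the moment condition $E_{d_{\pi_0}}[(\omega(s')-\omega(s)\pi(a\mid s)/\pi_0(a\mid s))f(s')]=0$. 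Granting this, the bracketed influence function of $\hat\eta$ equals $\psi-\Pi(\psi\mid S_\theta)=\Pi^{\perp}(\psi\mid S_\theta)$, the $L^2(d_{\pi_0})$ residual of projecting $\psi$ onto the span of the score components, whereas that of $\tilde\eta$ is $\psi$ itself.

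The conclusion is then immediate: $\Var[\Pi^{\perp}(\psi\mid S_\theta)]\preceq\Var[\psi]$ because the two summands are $L^2$-orthogonal, and conjugating by $A^{-1}$ preserves the Loewner order, so the asymptotic covariance of $\hat\eta$ is dominated by that of $\tilde\eta$; dividing by $N$ gives $E[(\hat\eta-\eta_0)^2]\le E[(\tilde\eta-\eta_0)^2]$ asymptotically (for vector-valued $\eta$, read $(\cdot)^2$ as the squared norm and take traces). The ``mild conditions'' are the standard regularity for this machinery: consistency of $\hat\eta$ and $\hat\theta$, a well-separated root, twice differentiability of $\psi$ in $(\eta,\theta)$ with integrable envelopes, nonsingular $A$ and $\mathcal I$, and moment bounds for the CLT and for the H\'ajek projection of the $V$-statistic. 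The step I expect to be the genuine obstacle is exactly this information identity: unlike the plain importance-sampling case of \citet{hanna2019importance}, here $\psi$ is a kernelized $V$-statistic and the next state $s'$ is coupled to the action $a$ through $T(\cdot\mid s,a)$, so one must verify carefully --- and this is where the ``mild conditions'' must be tailored to meet the hypotheses of \citet{henmi2007} --- that no residual cross-term between $\omega(s')$ and $S_\theta(a\mid s)$ survives the projection.
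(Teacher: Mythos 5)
Your plan is correct and is essentially the paper's own argument: the paper likewise Taylor-expands the first-order conditions for $\hat\eta$, $\tilde\eta$ and the MLE $\hat\theta$, invokes the projection argument of \citet{henmi2007}, and reduces everything to exactly your ``information identity'' $\mathbb{E}[\partial_\theta\partial_\eta G(\eta_0,\theta_0)]=-\mathbb{E}[\partial_\eta G(\eta_0,\theta_0)\,\partial_\theta S(\theta_0)]$, which it verifies by showing the residual cross-term has zero expectation via the conditional moment condition $\mathbb{E}[\omega(s')-\omega(s)\pi(a|s)/\pi_0(a|s)\mid\cdot]=0$ --- precisely the step you flagged. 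The only cosmetic difference is that the paper works directly with the pairwise kernel $G$ of the $V$-statistic (defining a pairwise score $S(\theta;(x_i,x_j))$) rather than first passing to a H\'ajek-projected i.i.d.\ estimating function, which changes nothing substantive.
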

As a direct consequence, we derive the finite-sample error bound for $\hat{\eta}$.
\begin{corollary}\label{crll: error bound} (informal) Let $N$ be the number of $(s,a,s')$ tuples in the data,
	$$\mathbb{E}[(\hat{\eta}-\eta_0)^2]=O\left(\frac{1}{N}\right)$$.
\end{corollary}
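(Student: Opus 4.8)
The corollary is, as stated, essentially a repackaging of the $\sqrt N$-consistency that the proof of Theorem~\ref{thm: BCD estimated} already establishes, so the plan is to exhibit $\hat\eta$ as a two-stage M-estimator and then run the standard parametric-rate argument. In the first stage, $\hat\pi_0$ is a maximum-likelihood estimate over a parametric family (Appendix~\ref{appdix:esti}), hence $\sqrt N$-consistent under the usual regularity conditions, with $\mathbb{E}[\|\hat\pi_0-\pi_0\|^2]=O(1/N)$. In the second stage, solving the inner maximization over $f$ in the min-max problem~(\ref{eq: BCD estimated}) in closed form via the RKHS representation of Appendix~\ref{appdx: kernel} turns it into minimization over $\eta$ of an empirical objective of the V-statistic form $L_N(\eta)=\frac{1}{N^2}\sum_{i,j}K(s_i',s_j')\,\delta_i(\eta)\,\delta_j(\eta)$, where $\delta_i(\eta)=\omega(\eta;s_i')-\omega(\eta;s_i)\,\pi(a_i|s_i)/\hat\pi_0(a_i|s_i)$. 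This $L_N$ is a smooth function of $\eta$ and of the sample, and $\hat\eta$ solves the estimating equation $\nabla_\eta L_N(\hat\eta)=0$.

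Next I would Taylor-expand the estimating equation around $\eta_0$, writing $0=\nabla_\eta L_N(\hat\eta)=\nabla_\eta L_N(\eta_0)+\nabla^2_\eta L_N(\bar\eta)\,(\hat\eta-\eta_0)$ for some $\bar\eta$ on the segment between $\hat\eta$ and $\eta_0$. On the event that $\nabla^2_\eta L_N(\bar\eta)\ge c>0$ --- which holds with probability $1-o(1/N)$ once $\eta_0$ is a well-separated minimizer of the population objective $L$ and $\sup_{\eta}|L_N(\eta)-L(\eta)|$ concentrates, e.g.\ via an exponential inequality for the associated empirical process --- this gives $|\hat\eta-\eta_0|\le c^{-1}|\nabla_\eta L_N(\eta_0)|$. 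It then suffices to show $\mathbb{E}[|\nabla_\eta L_N(\eta_0)|^2]=O(1/N)$. Since $\omega(\eta_0;\cdot)$ is the true correction, the population gradient vanishes at $\eta_0$ --- this is precisely the moment identity derived just before~(\ref{eq: BCD equation}) --- so $\nabla_\eta L_N(\eta_0)$ is a non-degenerate, mean-zero V-statistic up to the $O(1/\sqrt N)$ perturbation coming from $\hat\pi_0-\pi_0$, itself controlled by the first stage; the variance of such a V-statistic with a square-integrable kernel built from i.i.d.\ (or geometrically $\beta$-mixing, since the tuples come from an ergodic Markov chain) data is $O(1/N)$.

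Combining the two displays yields $\mathbb{E}[(\hat\eta-\eta_0)^2\,\mathbbm{1}_{\{\nabla^2_\eta L_N(\bar\eta)\ge c\}}]=O(1/N)$, and the contribution of the complementary event is negligible because $\eta$ ranges over the compact set $\mathcal{E}_\eta$, so $(\hat\eta-\eta_0)^2$ is uniformly bounded while that event has probability $o(1/N)$. This gives the claimed $\mathbb{E}[(\hat\eta-\eta_0)^2]=O(1/N)$; equivalently, it can be read off from the asymptotic normality of $\sqrt N(\hat\eta-\eta_0)$ proved inside Theorem~\ref{thm: BCD estimated} together with uniform integrability of $\{N(\hat\eta-\eta_0)^2\}_N$.

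The main obstacle I anticipate is the uniform lower bound on the empirical Hessian --- equivalently, ruling out near-degenerate minimizers of $L_N$ uniformly over samples. This is where the min-max/kernel structure and the estimated denominator $\hat\pi_0$ interact, and it needs a local strong-convexity assumption on $L$ near $\eta_0$ together with concentration of the empirical process; these are the ``mild conditions'' alluded to in Theorem~\ref{thm: BCD estimated}. Everything else is routine bookkeeping: checking the moment conditions ($\pi_0$ bounded away from $0$; $\omega(\eta;\cdot)$ and its first two $\eta$-derivatives square-integrable under $d_{\pi_0}$, locally uniformly in $\eta$; $K$ bounded) needed for the V-statistic variance bound and the first-stage MLE rate.
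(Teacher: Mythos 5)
Your proposal follows essentially the same route as the paper's proof: linearize the first-order condition of the kernelized objective at the truth, so that $\hat\eta-\eta_0$ is controlled by the mean-zero V-statistic gradient term, whose second moment is $O(1/N)$ by the (Markov-chain) CLT, plus the first-stage maximum-likelihood error $\mathbb{E}[(\hat\theta-\theta_0)^2]=O(1/N)$; the paper expands jointly in $(\eta,\theta)$ with coefficients $K_1,K_2$, while you expand in $\eta$ only and fold the $\hat\pi_0$ effect into the gradient, which is the same bookkeeping. The only substantive difference is that you are more careful in turning the asymptotic expansion into an actual MSE bound (handling the degenerate-Hessian event and uniform integrability), a point the paper glosses over by taking expectations through an $o_p(1)$ relation.
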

The precise conditions for Theorem \ref{thm: BCD estimated} and Corollary \ref{crll: error bound} to hold and their proofs are in Appendix \ref{appdx: thm1}. 
\section{EMP for Multiple Behavior Policies}\label{sec: mis}
In this section, we shall propose our EMP method for off-policy policy evaluation with multiple known or unknown behavior policies and establish theoretic results on variance reduction of EMP.

Before that, we first give a detailed description on the data sample and its distribution. Assume the state-action-next-state tuples are generated by $m$ different unknown behavior policies $\pi_j, j= 1,2,...,m$. Let $d_{\pi_j}(s)$ be the stationary state distribution and $N_j$ be the number of state-action-next-state tuples by policy $\pi_j$, for $j=1,2,...,m$. Let $N=\sum_j N_j$ and denote by $w_j=N_j/ N$ the proportion of data generated by policy $\pi_j$. We use $\mathcal{D}$ to denote the data set and $\mathcal{D}=\{(s_{j,n_j}, a_{j,n_j}, s'_{j,n_j} ):j=1,2,..,m, n_j = 1,2,...,N_j\}$. Note that the policy label $j$ in the subscript is only for notation clarity and it is not revealed in the data. Then, a single $(s,a,s')$ tuple simply follows the marginal distribution $d_0(s,a,s'):= \sum_j w_jd_{\pi_j}(s)\pi_j(a|s)T(s'|a,s)$. With slight notation abusion, we write $d_0(s,a) = \sum_j w_j d_{\pi_j}(s)\pi(a|s)$. 


\subsection{EMP Method}\label{sec: mis method}
Now we derive the EMP method in the multiple-behavior-policy setting and explicitly explain what is the mixed policy to be estimated in EMP.  

Let $d_0:= \sum_j w_jd_{\pi_j}$ be the mixture of stationary distributions of the behavior policies. For each state-action pair $(a,s)$, define $\pi_0(a|s)$ as the weighted average of the behavior policies:
\begin{equation}\label{eq: pi_0}
\pi_0(a|s):=\sum_{j} \frac{w_jd_{\pi_j}(s)}{d_{\pi_0}(s)}\pi_j(a|s), \forall~ (s,a).
\end{equation}
It is easy to check that for each $s$, $\pi_0(\cdot|s)$ is a distribution on the action space and hence defines a policy by itself. We call $\pi_0$ the \textit{mixed policy}. Let $\omega = d_\pi/d_0$, which is a state distribution ratio. Then, $d_0$, $\pi_0$ and $\omega$ satisfy the following relation with the average reward $R_{\pi}$. 

\begin{proposition}\label{prop: emp reward}
	\begin{equation}\label{eq: emp reward}
	\begin{aligned}
	R_\pi = E_{(s,a)\sim d_0}\left[\omega(s) \frac{\pi(a|s)}{\pi_0(a|s)}r(s,a)\right].
	\end{aligned}
	\end{equation}
\end{proposition}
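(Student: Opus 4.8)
The plan is to obtain \eqref{eq: emp reward} by a single change of measure, starting from the identity $R_\pi = \E_{(s,a)\sim d_\pi}[r(s,a)]$, which holds because $d_\pi$ is the stationary state distribution induced by $\pi$ and was already recorded in the background section. Writing this expectation out, $R_\pi = \sum_{s,a} d_\pi(s)\pi(a|s)\,r(s,a)$ (with the sum understood as an integral against the appropriate densities in the continuous state-action case), I would insert the factor $\tfrac{d_0(s)\pi_0(a|s)}{d_0(s)\pi_0(a|s)} = 1$ so that the summand reads $\tfrac{d_\pi(s)}{d_0(s)}\cdot\tfrac{\pi(a|s)}{\pi_0(a|s)}\cdot d_0(s)\pi_0(a|s)\cdot r(s,a)$.

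The only algebraic identity needed is $d_0(s)\pi_0(a|s) = d_0(s,a)$, where $d_0(s,a) = \sum_j w_j d_{\pi_j}(s)\pi_j(a|s)$ is the marginal density of a single sampled $(s,a)$ pair (obtained by integrating $d_0(s,a,s')$ over $s'$). This is immediate from the definition \eqref{eq: pi_0} of the mixed policy: multiplying both sides by $d_0(s) = \sum_j w_j d_{\pi_j}(s)$ gives $d_0(s)\pi_0(a|s) = \sum_j w_j d_{\pi_j}(s)\pi_j(a|s) = d_0(s,a)$; summing \eqref{eq: pi_0} over $a$ simultaneously confirms that $\pi_0(\cdot\mid s)$ integrates to one and hence is a bona fide policy. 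Substituting this identity together with $\omega = d_\pi/d_0$, the summand becomes $\omega(s)\tfrac{\pi(a|s)}{\pi_0(a|s)}r(s,a)$ weighted by $d_0(s,a)$, and recognizing the resulting sum as $\E_{(s,a)\sim d_0}[\,\cdot\,]$ gives exactly \eqref{eq: emp reward}.

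What must be flagged — and I would state it as a standing coverage assumption rather than prove it — is absolute continuity: $d_0(s)>0$ whenever $d_\pi(s)>0$ and $\pi_0(a|s)>0$ whenever $\pi(a|s)>0$, so that $\omega(s)$ and $\pi(a|s)/\pi_0(a|s)$ are finite on the support of $d_\pi$ and no probability mass is dropped by the reweighting. Because of the mixture structure of $d_0$ and $\pi_0$, this holds as soon as every state-action pair visited under $\pi$ is visited with positive probability under at least one behavior policy $\pi_j$ with $w_j>0$, the natural analogue of the usual single-policy coverage condition. With this in hand the proof is essentially a one-line computation; I do not foresee any real obstacle, the only care needed being to carry out the manipulation at the level of densities/integrals so that it applies verbatim to the continuous environments used in the experiments.
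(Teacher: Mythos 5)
Your proposal is correct and uses essentially the same argument as the paper: both hinge on the single identity $d_0(s)\pi_0(a|s)=\sum_j w_j d_{\pi_j}(s)\pi_j(a|s)$ from the definition of the mixed policy, the paper simply running the chain of equalities from $\mathbb{E}_{(s,a)\sim d_0}\left[\omega(s)\frac{\pi(a|s)}{\pi_0(a|s)}r(s,a)\right]$ back to $R_\pi$ while you run it forward from $R_\pi$. Your explicit coverage (absolute continuity) caveat is a reasonable addition that the paper leaves implicit, but it does not change the substance of the proof.
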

Besides, the state distribution ratio $\omega$ can be characterized by the stationary equation.
\begin{proposition}\label{prop: emp omega}
	The function $\omega(s)=d_\pi(s)/d_{\pi_0}(s)$ (up to a constant) if and only if, 
	\begin{equation}\label{eq: emp omega}
	\mathbb{E}_{(s,a,s')\sim d_0}\left[\left(\omega(s')-\omega(s)\frac{\pi(a|s)}{\pi_0(a|s)}\right)f(s')\right]=0, \text{ for all }f: S\to\mathbb{R}.
	\end{equation}
\end{proposition}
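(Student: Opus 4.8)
The plan is to prove Proposition~\ref{prop: emp omega} by showing that equation~(\ref{eq: emp omega}) is exactly the weak form of the stationarity identity for $d_\pi$ relative to the mixture distribution $d_0$, and then arguing that this weak form pins down $\omega$ up to a multiplicative constant. First I would establish the forward direction. Starting from the fact that $d_\pi$ is the stationary state distribution under $\pi$, we have $d_\pi(s') = \sum_{s,a} d_\pi(s)\pi(a|s)T(s'|s,a)$ for all $s'$, exactly as in~(\ref{eq:stationary}). The key algebraic move is to rewrite $d_\pi(s)\pi(a|s)$ as $\omega(s)\,\frac{\pi(a|s)}{\pi_0(a|s)}\,d_0(s)\pi_0(a|s)$; this is legitimate precisely because of the definition~(\ref{eq: pi_0}) of the mixed policy $\pi_0$, which was constructed so that $d_0(s)\pi_0(a|s) = \sum_j w_j d_{\pi_j}(s)\pi_j(a|s) = d_0(s,a)$. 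Substituting, $d_\pi(s') = \sum_{s,a} \omega(s)\frac{\pi(a|s)}{\pi_0(a|s)} d_0(s,a,s')$ where $d_0(s,a,s') = d_0(s,a)T(s'|s,a)$. On the other hand $d_\pi(s') = \omega(s') d_0(s')= \sum_{a,s}\omega(s')d_0(s,a,s')$ by marginalizing. Subtracting the two expressions for $d_\pi(s')$, multiplying by an arbitrary $f(s')$, and summing over $s'$ gives~(\ref{eq: emp omega}).

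For the converse, I would reverse the argument: assume~(\ref{eq: emp omega}) holds for all $f$. Define $g(s') := \sum_{s,a}\bigl(\mathbbm{1}[s=s'] \omega(s) - \text{(transition term)}\bigr)$... more cleanly, define the signed measure $\mu(s') := \sum_{s,a}\omega(s)\frac{\pi(a|s)}{\pi_0(a|s)} d_0(s,a,s') - \omega(s')d_0(s')$. Then~(\ref{eq: emp omega}) says $\sum_{s'}\mu(s')f(s') = 0$ for all $f$, hence $\mu \equiv 0$. This means the (unnormalized) measure $\tilde d(s') := \omega(s')d_0(s')$ satisfies $\tilde d(s') = \sum_{s,a}\frac{\omega(s)\pi(a|s)}{\pi_0(a|s)}d_0(s,a,s') = \sum_{s,a}\tilde d(s)\pi(a|s)T(s'|s,a)$, i.e. $\tilde d$ is a (nonnegative, finite) stationary measure for the Markov chain induced by $\pi$. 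By uniqueness of the stationary distribution of this chain (which should be part of the ``mild conditions'' — ergodicity/irreducibility already implicitly assumed so that $d_\pi$ is well-defined), $\tilde d$ must be proportional to $d_\pi$, so $\omega(s')d_0(s') = c\, d_\pi(s')$, giving $\omega = c\, d_\pi/d_0$ as claimed.

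There are two places that need care. The first is checking that the rewriting $d_\pi(s)\pi(a|s) = \omega(s)\frac{\pi(a|s)}{\pi_0(a|s)}d_0(s,a)$ is valid wherever $d_0(s) > 0$; one needs $\pi_0(a|s) > 0$ whenever $\pi(a|s) > 0$ on the support of $d_0$, which is a coverage/absolute-continuity assumption analogous to the single-policy case and should be stated among the standing assumptions. The second, and the main obstacle, is the uniqueness step in the converse: one must argue that a nonnegative solution $\tilde d$ of the stationarity equation is unique up to scaling. This follows from standard Markov chain theory (Perron--Frobenius / uniqueness of invariant distribution for an irreducible chain, or the corresponding statement for the general state space with a suitable ergodicity hypothesis), but it does rely on the ambient assumptions guaranteeing $d_\pi$ exists and is unique. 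I would state this reliance explicitly rather than belabor it, since the same assumption is already implicit in the definition of $R_\pi$ and in the BCH development of Section~\ref{sec: BCD known}. The rest is bookkeeping with sums (or integrals in the continuous case), and Proposition~\ref{prop: emp reward} is not even needed here — it is a separate, simpler change-of-measure computation.
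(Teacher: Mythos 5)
Your proposal is correct and follows essentially the same route as the paper's own proof: the forward direction uses the stationarity of $d_\pi$ together with the change of measure induced by the definition of the mixed policy $\pi_0$, and the converse recognizes that $\omega\, d_0$ satisfies the stationarity equation and hence must coincide with $d_\pi$ up to a constant. Your explicit treatment of the uniqueness of the invariant measure and the coverage condition $\pi_0(a|s)>0$ on the support of $d_0$ merely spells out assumptions the paper leaves implicit, so there is no substantive difference.
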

In the special case when $m=1$, i.e. the data are generated by a single behavior policy, Proposition \ref{prop: emp omega} reduces to Theorem 1 of \cite{liumethod}. The above two Propositions indicate that, to certain extend, the $(s,a,s')$ tuples generated by multiple behavior policies can be pooled together and treated as if they are generated by a single behavior policy $\pi_0$.

Note that expression of $\pi_0$ (\ref{eq: pi_0}) involves not only the behavior policies but also the state stationary distributions. In EMP method, we shall use a pre-estimation step to generate an estimate $\hat{\pi}_0$ from the data. Based on Proposition \ref{prop: emp omega}, the state distribution ratio $\omega$ can be estimated by the following min-max problem
\begin{equation}\label{eq: multiple unknown}
\min_{\omega}\max_f~ \mathbb{E}_{(s,a,s')\sim \mathcal{D}}\left[ \left(\omega(s')-\omega(s)\frac{\pi(a|s)}{\hat{\pi}_0(a|s)}\right)f(s')\right]^2.
\end{equation}
Finally, EMP estimates the average reward according to (\ref{eq: emp reward}) where $d_0$ is approximate by data $\mathcal{D}$.

Applying Theorem \ref{thm: BCD estimated}, we show that using the estimated $\hat{\pi}_0$ in EMP can actually reduce the MSE of the learned stationary distribution ratio $\omega$.
\begin{proposition}\label{thm: multiple estimated}
	Under the same conditions of Theorem \ref{thm: BCD estimated},
	if $\omega(\tilde{\eta};s)$ and $\omega(\hat{\eta};s)$ are the stationary distribution correction learned from (\ref{eq: multiple unknown}) and from the same min-max problem but with exact value of $\pi_0$, then, asymptotically
	$$E[(\hat{\eta}-\eta_0)^2]\leq E[(\tilde{\eta}-\eta_0)^2].$$ 
	As a result, $E[(\hat{\eta}-\eta_0)^2] = O\left(\frac{1}{N}\right)$.
\end{proposition}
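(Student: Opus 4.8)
\textbf{Proof proposal for Proposition \ref{thm: multiple estimated}.}

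The plan is to reduce this statement directly to Theorem \ref{thm: BCD estimated} by observing that the multiple-behavior-policy problem (\ref{eq: multiple unknown}) has exactly the same structure as the single-behavior-policy problem (\ref{eq: BCD estimated}), once we identify the mixed policy $\pi_0$ of (\ref{eq: pi_0}) with the role of the behavior policy in Section \ref{sec: BCD estimated}. First I would note that, by the definition of the marginal distribution $d_0(s,a,s') = \sum_j w_j d_{\pi_j}(s)\pi_j(a|s)T(s'|a,s)$ and of $\pi_0$, a single draw $(s,a,s')$ from $\mathcal{D}$ is distributed exactly as a draw from a ``pooled'' MDP with state-visitation distribution $d_0$ and behavior policy $\pi_0$: indeed $d_0(s,a,s') = d_0(s)\pi_0(a|s)T(s'|a,s)$ by (\ref{eq: pi_0}). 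Hence the min-max objective in (\ref{eq: multiple unknown}) is formally identical to (\ref{eq: BCD estimated}) with $\pi_0$ playing the part of the single behavior policy and $d_0$ the part of $d_{\pi_0}$, and by Proposition \ref{prop: emp omega} the true ratio $\omega = d_\pi/d_0$ is the unique (up to constant) solution of the population equation.

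The one genuine subtlety is that in the single-behavior case the estimator $\hat\pi_0$ is obtained by maximum likelihood for a policy, whereas here $\pi_0$ is not a ``bare'' policy but a $d$-weighted mixture, so one must check that the pre-estimation step for $\hat\pi_0$ still fits the hypotheses of Theorem \ref{thm: BCD estimated} — in particular that $\hat\pi_0$ is an asymptotically linear (e.g. maximum-likelihood-type) estimator of $\pi_0$ from the same sample that appears in the objective, with the corresponding regularity and identifiability conditions (the realizability $\omega(\cdot)=\omega(\eta_0;\cdot)\in\Omega$, smoothness of the estimating equation in $\eta$, nonsingularity of the relevant information/sensitivity matrices). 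I would state that under the stated ``same conditions'' this identification holds, so that the data-dependence of $\hat\pi_0$ cancels a portion of the sampling noise in the empirical objective exactly as in the proof of Theorem \ref{thm: BCD estimated} (the Henmi--Eguchi-type argument of \cite{henmi2007}): the projection of the ``known-$\pi_0$'' estimating function onto the orthogonal complement of the nuisance score has no larger asymptotic variance than the function itself.

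With that identification in place, the inequality $E[(\hat\eta-\eta_0)^2]\le E[(\tilde\eta-\eta_0)^2]$ is precisely the conclusion of Theorem \ref{thm: BCD estimated} applied to the pooled problem, and $E[(\hat\eta-\eta_0)^2]=O(1/N)$ follows from Corollary \ref{crll: error bound} with $N=\sum_j N_j$, since the pooled sample has size $N$ and the $w_j$ are fixed proportions. The main obstacle, and the only place where real work beyond bookkeeping is needed, is verifying that the pre-estimation of the weighted mixture $\pi_0$ — which mixes in the unknown stationary weights $w_j d_{\pi_j}(s)/d_0(s)$ — is still covered by the asymptotic-linearity hypotheses of Theorem \ref{thm: BCD estimated}; once that regularity is granted, the rest is a direct invocation of the single-policy result and its corollary.
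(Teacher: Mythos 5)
Your proposal matches the paper's own proof: the paper likewise reduces Proposition \ref{thm: multiple estimated} to Theorem \ref{thm: BCD estimated} by treating the pooled sample as generated by the mixed policy $\pi_0$ (assumed to lie in the parametric class $\{\pi(\theta;a,s)\}$), with $\hat{\pi}_0$ the maximum-likelihood estimate computed from the pooled data, and then repeating the Henmi--Eguchi-type argument verbatim. Your extra bookkeeping — the identity $d_0(s,a,s')=d_0(s)\pi_0(a|s)T(s'|a,s)$ and the caveat that the mixture weights $w_j d_{\pi_j}(s)/d_0(s)$ must be absorbed into the parametric/regularity assumptions — is exactly the identification the paper implicitly relies on, so the argument is essentially the same.
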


\subsection{Why Pooling is Beneficial for EMP}\label{sec: mis analysis}
One important feature of EMP is that it pools the data from different policy behaviors together and treat them as if they are from a single mixed policy. Of course, pooling makes EMP applicable to settings with minimal information on the behavior policies, for instance, EMP does not even require the knowledge on the number of behavior policies. In this part, we show that, the pooling feature of EMP is not just a compromise to the lack of behavior policy information, it also leads to variance reduction in an intrinsic manner.

If instead, the data can be classified according to the behavior policies and treated separately, we can still use EMP, which reduces to (\ref{eq: BCD estimated}), or any other single-behavior-policy method, to obtain the stationary distribution correction $\omega_j=d_{\pi}/d_{\pi_j}$ for each behavior policy. Given $\omega_j$, a common approach for variance reduction is to apply multiple importance sampling (MIS) \citep{Tirinzoni2019,Veach1995} technique and the average reward estimator is of the form
\begin{equation}\label{eq: MIS}
\hat{R}_{MIS}=\sum_{j=1}^m\frac{1}{N_j}\sum_{n=1}^{N_j}h_j(s_{j,n})\omega_j(s_{j,n})\pi(a_{j,n}|s_{j,n})r(s_{j,n},a_{j,n}),
\end{equation}
where the function $h$ is often referred to as heuristics and must be a partition of unity, i.e., $\sum_j h_j(s)=1$ for all $s\in S$. It has been proved by \citep{Veach1995} that MIS is unbiased, and, for given $w_j = N_j/N$, there is an optimal heuristic function to minimize the variance of $\hat{R}_{MIS}$.



\begin{proposition}\label{prop: heuristic}
	For MIS with fixed values of $w_j, j=1,2,...,m$, among all possible values of heuristics $h$, the balanced heuristic 
	$$h_j(s)= \frac{w_j d_{\pi_j}(s)}{\sum_{j=1}^m w_j d_{\pi_j}(s)}, ~\forall j = 1,2,...,m\text{ and }s\in S,$$
	reaches the minimal variance.
\end{proposition}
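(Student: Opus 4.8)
The plan is to read (\ref{eq: MIS}) as a multiple-importance-sampling (MIS) estimator whose $m$ ``sampling densities'' are the behavior stationary distributions $d_{\pi_j}$ over states, and then to minimize its variance by a Lagrange-multiplier argument carried out pointwise in $s$. Set $F(s):=d_\pi(s)\sum_a\pi(a|s)r(s,a)$, so that $R_\pi=\sum_s F(s)$ and, crucially, $F$ does not depend on $j$. Writing $\xi_j$ for the per-sample summand $\omega_j(s)\tfrac{\pi(a|s)}{\pi_j(a|s)}r(s,a)$ (I take the intended weighting to include the action ratio $\pi/\pi_j$), a one-line computation using $\omega_j=d_\pi/d_{\pi_j}$ gives $\E[\xi_j\mid s]=F(s)/d_{\pi_j}(s)$ for $s\sim d_{\pi_j}$, $a\sim\pi_j(\cdot\mid s)$; thus each summand of (\ref{eq: MIS}) is, conditionally on its state, an unbiased estimate of $F(s)/d_{\pi_j}(s)$ --- exactly the MIS template with densities $d_{\pi_j}$ and integrand $F$ --- and $\sum_j h_j\equiv1$ yields unbiasedness of $\hat R_{MIS}$, as in \citet{Veach1995}.

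Because the $N$ samples are mutually independent, $\Var(\hat R_{MIS})=\sum_j N_j^{-1}\Var_{s\sim d_{\pi_j}}[h_j(s)\xi_j]$. Expanding each term as second moment minus squared mean, the squared-mean parts contribute $-\sum_j N_j^{-1}\big(\sum_s h_j(s)F(s)\big)^2$ (returned to below), while conditioning on the state and applying Jensen to $\E[\xi_j\mid s]$ lower-bounds the second-moment parts --- with equality once the action-level sampling noise is set aside --- so that
\begin{equation*}
\Var(\hat R_{MIS})\ \ge\ \sum_s F(s)^2\sum_j\frac{h_j(s)^2}{N_j\,d_{\pi_j}(s)}\ -\ \sum_j\frac{1}{N_j}\Big(\sum_s h_j(s)F(s)\Big)^2 .
\end{equation*}
The first sum is the quantity to minimize subject to $\sum_j h_j(s)=1$, and it decouples over $s$: for each $s$ it is a strictly convex quadratic in $(h_1(s),\dots,h_m(s))$ under a single linear constraint, so the stationarity condition $2h_j(s)/(N_j d_{\pi_j}(s))=\lambda(s)$ identifies the unique --- and automatically nonnegative and normalized --- minimizer
\begin{equation*}
h_j(s)\ =\ \frac{N_j\,d_{\pi_j}(s)}{\sum_k N_k\,d_{\pi_k}(s)}\ =\ \frac{w_j\,d_{\pi_j}(s)}{\sum_k w_k\,d_{\pi_k}(s)},
\end{equation*}
i.e. the balanced heuristic.

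The main obstacle is the subtracted term $\sum_j N_j^{-1}\big(\sum_s h_j(s)F(s)\big)^2$: it depends on $h$, so $\Var(\hat R_{MIS})$ is not globally convex in $h$ and minimizing the second-moment part alone is not the whole story. I would resolve this exactly as \citet{Veach1995}: writing $\hat R_{\mathrm{bal}}$ and $\hat R_{h}$ for the estimator (\ref{eq: MIS}) under the balanced heuristic and under an arbitrary heuristic $h$, for nonnegative rewards the subtracted term for $\hat R_h$ is at most $R_\pi^2/\min_j N_j$ (using $\sum_j\sum_s h_jF=R_\pi$ and nonnegativity), whereas under the balanced heuristic $\sum_s h_j(s)F(s)=w_j\sum_s (d_{\pi_j}(s)/d_0(s))F(s)$, so Jensen makes this term at least $R_\pi^2/N$; combined with the pointwise optimality of the second-moment part this gives $\Var(\hat R_{\mathrm{bal}})\le\Var(\hat R_h)+R_\pi^2\big(1/\min_j N_j-1/N\big)$, so the balanced heuristic is variance-optimal up to a gap that vanishes as the $N_j\to\infty$ with the proportions $w_j$ fixed (the regime in which the paper's asymptotic statements live); the action-level noise set aside above is an additive per-strategy remainder treated as lower order. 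I would close by recording that under this heuristic $h_j(s)\omega_j(s)=w_j\,d_\pi(s)/d_0(s)$, so $\hat R_{\mathrm{bal}}$ collapses to the pooled estimator (\ref{eq: emp reward}) --- which is the point of the section.
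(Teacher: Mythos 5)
You should first know that the paper itself never proves this proposition: Appendix~\ref{appdx: multiple} covers only Propositions~\ref{prop: emp reward}--\ref{thm: multiple estimated}, and the text simply leans on \citet{Veach1995}. Your route is essentially that balance-heuristic argument, so in spirit you are following the paper's implicit source. The problem is that what you actually establish is weaker than the statement you were asked to prove: your conclusion is $\Var(\hat R_{\mathrm{bal}})\le\Var(\hat R_h)+R_\pi^2\bigl(1/\min_j N_j-1/N\bigr)$, i.e.\ near-optimality, not ``reaches the minimal variance.'' This is not something you can patch by working harder on the subtracted mean terms: they genuinely depend on $h$, the balance heuristic is not exactly variance-optimal in general, and Veach's theorem is stated as a near-optimality bound for precisely this reason. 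So, read literally, the proposition is stronger than what either you or the cited result delivers; your write-up should say explicitly that it proves (asymptotic) near-optimality, which is the honest reading of the paper's claim as well.

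The second, more concrete gap is the step where you ``set aside the action-level sampling noise.'' Conditional on $s$, $\mathbb{E}[\xi_j^2\mid s]=\omega_j(s)^2\sum_a \pi(a|s)^2 r(s,a)^2/\pi_j(a|s)$, which depends on $j$ through $\pi_j$ and multiplies $h_j(s)^2$; it is not an $h$-independent, per-strategy additive remainder. With it included, the pointwise minimizer of the second-moment part is $h_j(s)\propto N_j d_{\pi_j}(s)/V_j(s)$ with $V_j(s)=\sum_a\pi(a|s)^2r(s,a)^2/\pi_j(a|s)$, which is not the balanced heuristic unless the $V_j$ coincide across $j$; moreover your Jensen inequality only lower-bounds the competitor's variance, whereas the comparison also needs an upper bound on the balanced-heuristic side, and the discarded action-level terms do not match across the two sides. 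Two clean fixes: (i) state and prove the result for the state-level (Rao--Blackwellized) estimator in which the action average $\sum_a\pi(a|s)r(s,a)$ is computed exactly---that is the setting where your pointwise Lagrange computation is airtight---or (ii) run Veach's argument at the state--action level with densities $d_{\pi_j}(s)\pi_j(a|s)$, where the balanced heuristic is $h_j(s,a)\propto N_j d_{\pi_j}(s)\pi_j(a|s)$ and the resulting estimator again collapses, via $\sum_k w_k d_{\pi_k}(s)\pi_k(a|s)=d_0(s)\pi_0(a|s)$, to the pooled EMP estimator (\ref{eq: emp reward}). Your final remark identifying the balanced-heuristic estimator with (\ref{eq: emp reward}) is correct and is indeed the point of the section, and your reading of (\ref{eq: MIS}) as containing the ratio $\pi/\pi_j$ is the right repair of the paper's notation.
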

Plug the optimal heuristic $h_j(s)$ into MIS estimator (\ref{eq: MIS}), and we will obtain that  the optimal MIS estimator coincides with the EMP estimator (\ref{eq: emp reward}), i.e.
\begin{equation}\label{eq: omega}
\begin{aligned}
\hat{R}_{MIS}
& = \mathbb{E}_{(s,a)\sim \mathcal{D}}\left[\frac{d_{\pi}(s)}{d_0(s)}\pi(a|s)r(s,a)\right].
\end{aligned}
\end{equation}
In this light, by pooling the data together and directly learning $\omega$. EMP also learns the optimal MIS weight inexplicitly.


\begin{figure*}[h]
 \centering
  \subfigure[State Distribution Comparison]{
 	\includegraphics[ width=1.6in]{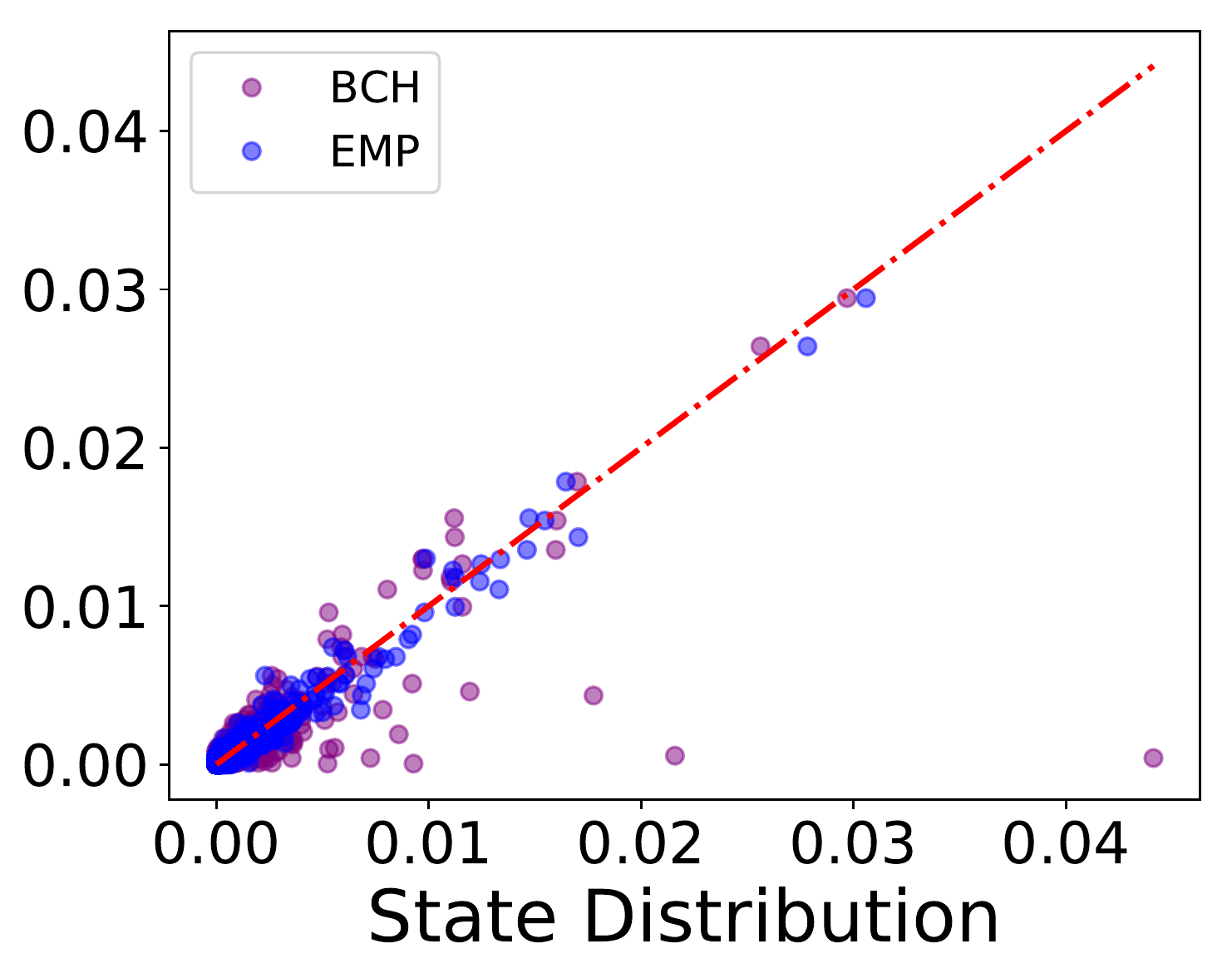}
 	\label{fig:tv-subfigure3}}
 ~
 \subfigure[TV Distance]{%
 	\includegraphics[ width=1.6in]{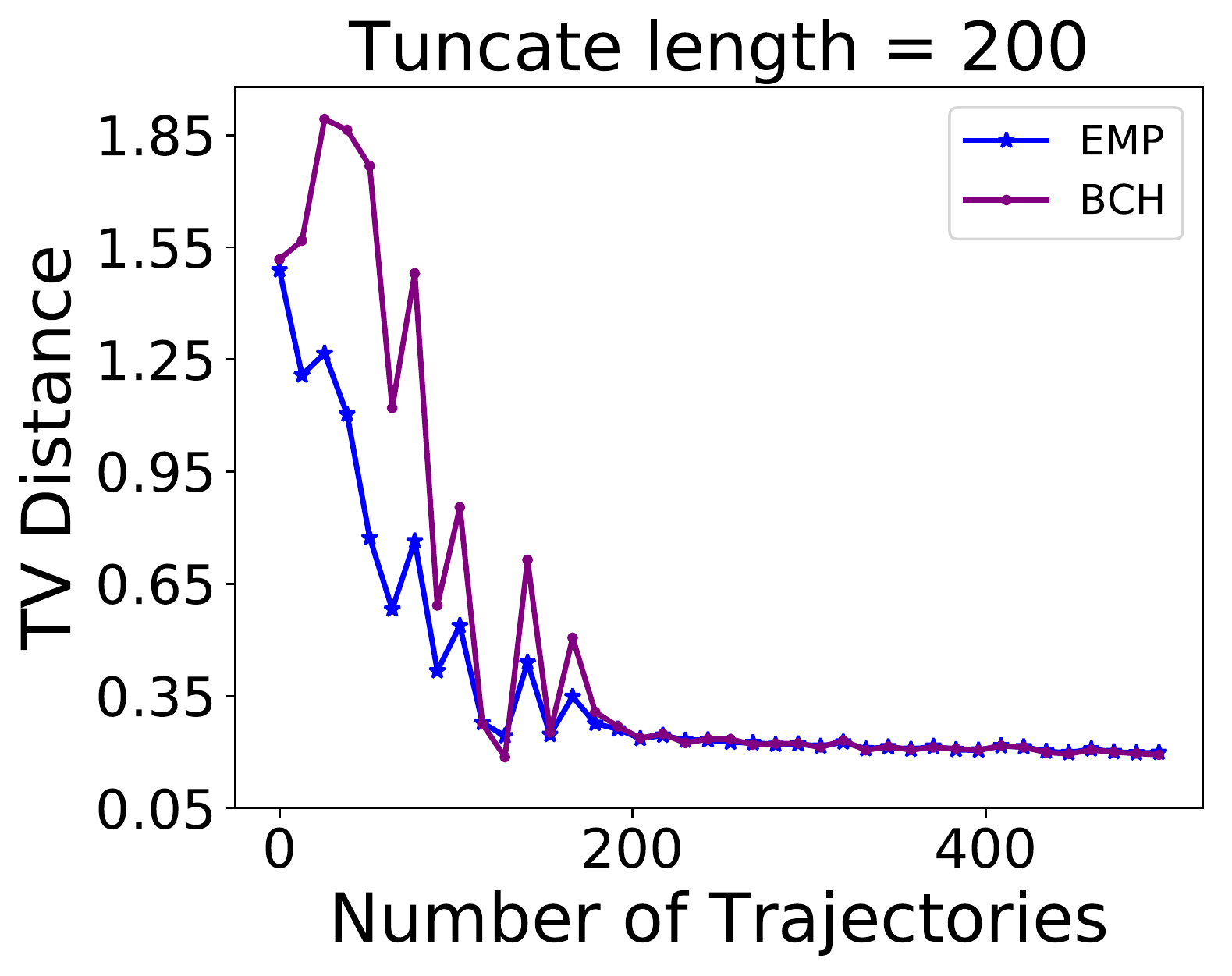}
 	\label{fig:tv-subfigure1}
 	\includegraphics[ width=1.6in]{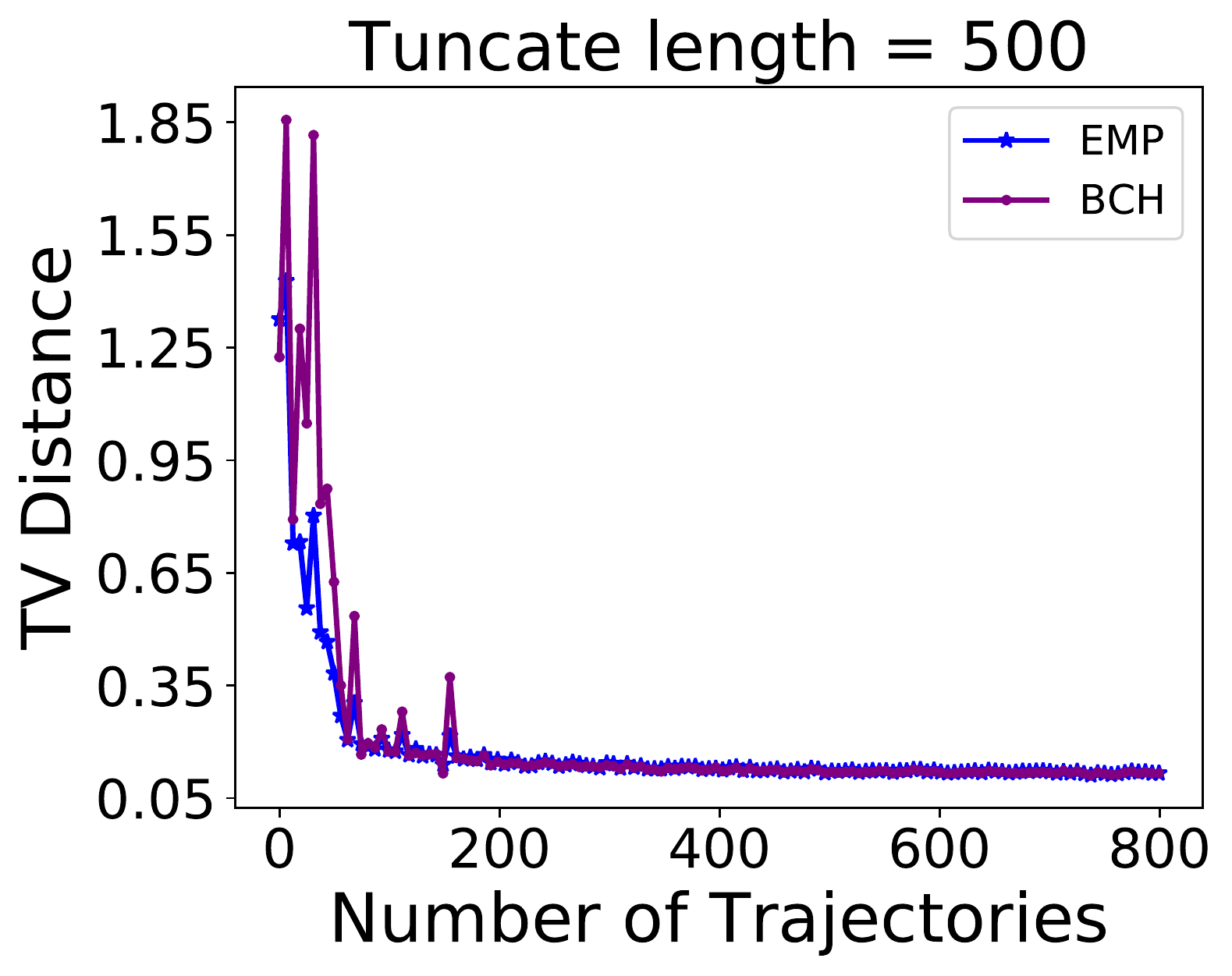}
 	\label{fig:tv-subfigure2}}
 ~
 \vspace{-1em}
 \caption{(a) shows that scatter plot of pairs ($\hat{d}_{\pi_\text{true}}$, $d_\pi$) and pairs ($\hat{d}_{\pi_\text{esti}}$, $d_\pi$). The diagonal line indicates exact estimation. The default values of the number of trajectories is 200, and the length of horizon is 200. (b) shows the weighted total variation distance (TV distance) between $\hat{d}_{\pi_\text{true}}$ and $d_\pi$, $\hat{d}_{\pi_\text{esti}}$ and $d_\pi$ respectively, along
different number of trajectories and the length of horizons. }

\label{dis-com}
\end{figure*}

\begin{figure*}
	\centering
	\hspace{-1.2em}
	\subfigure[Taxi]{%
		\includegraphics[ width=1.24in]{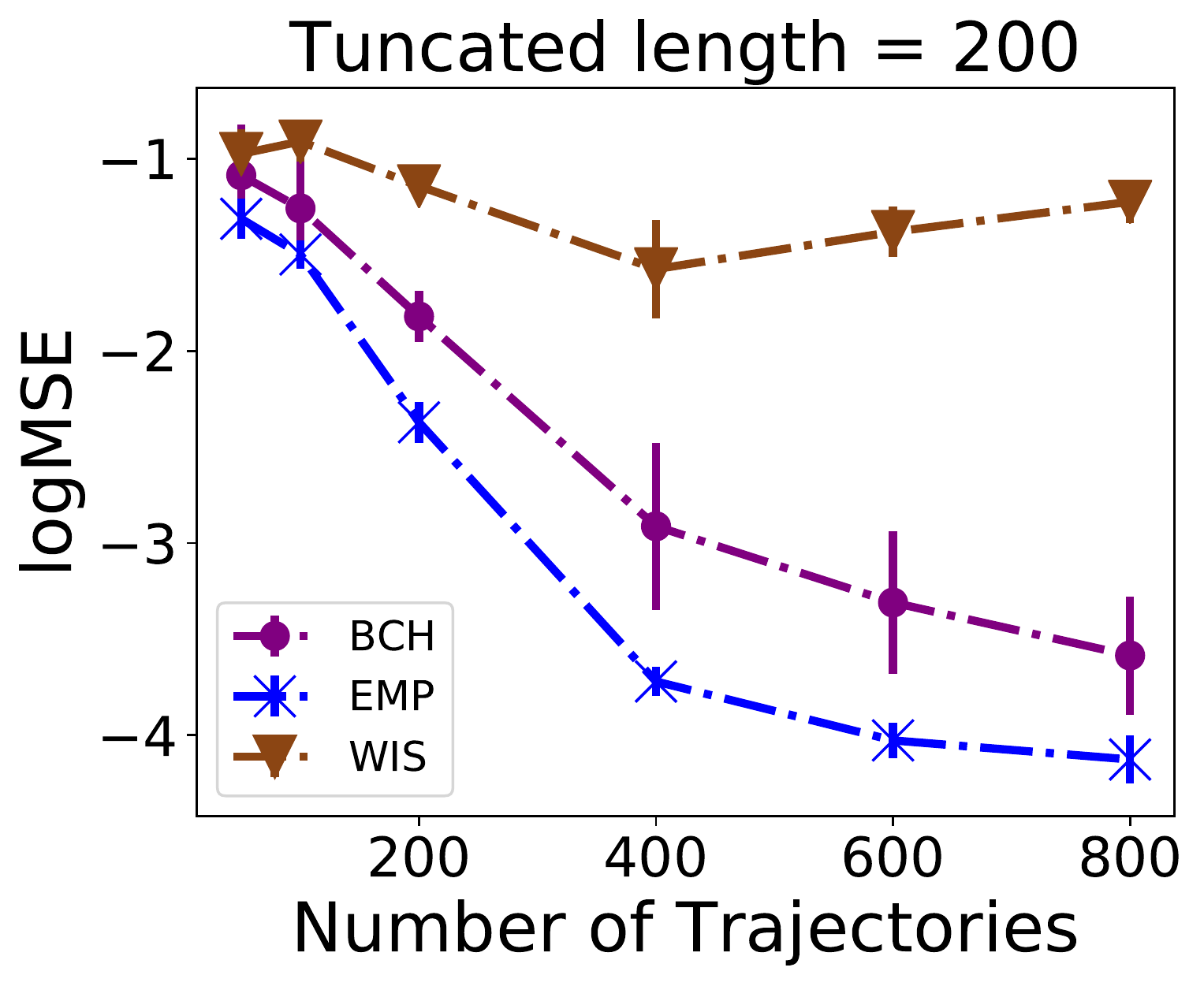}
		\label{fig:subfigure5}
		\includegraphics[ width=1.24in]{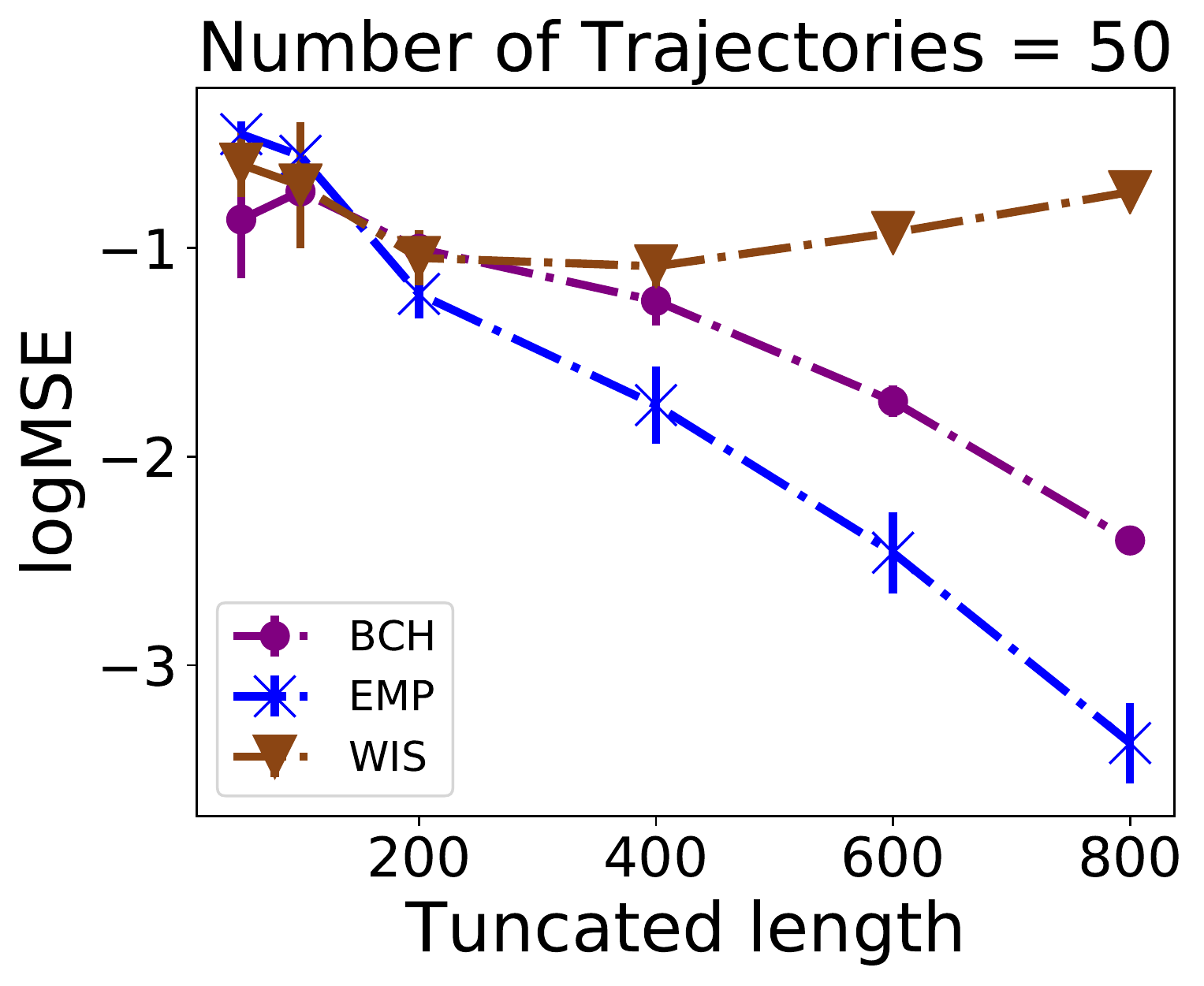}
		\label{fig:subfigure5}}
~
	\subfigure[Singlepath]{%
		\includegraphics[width=1.24in]{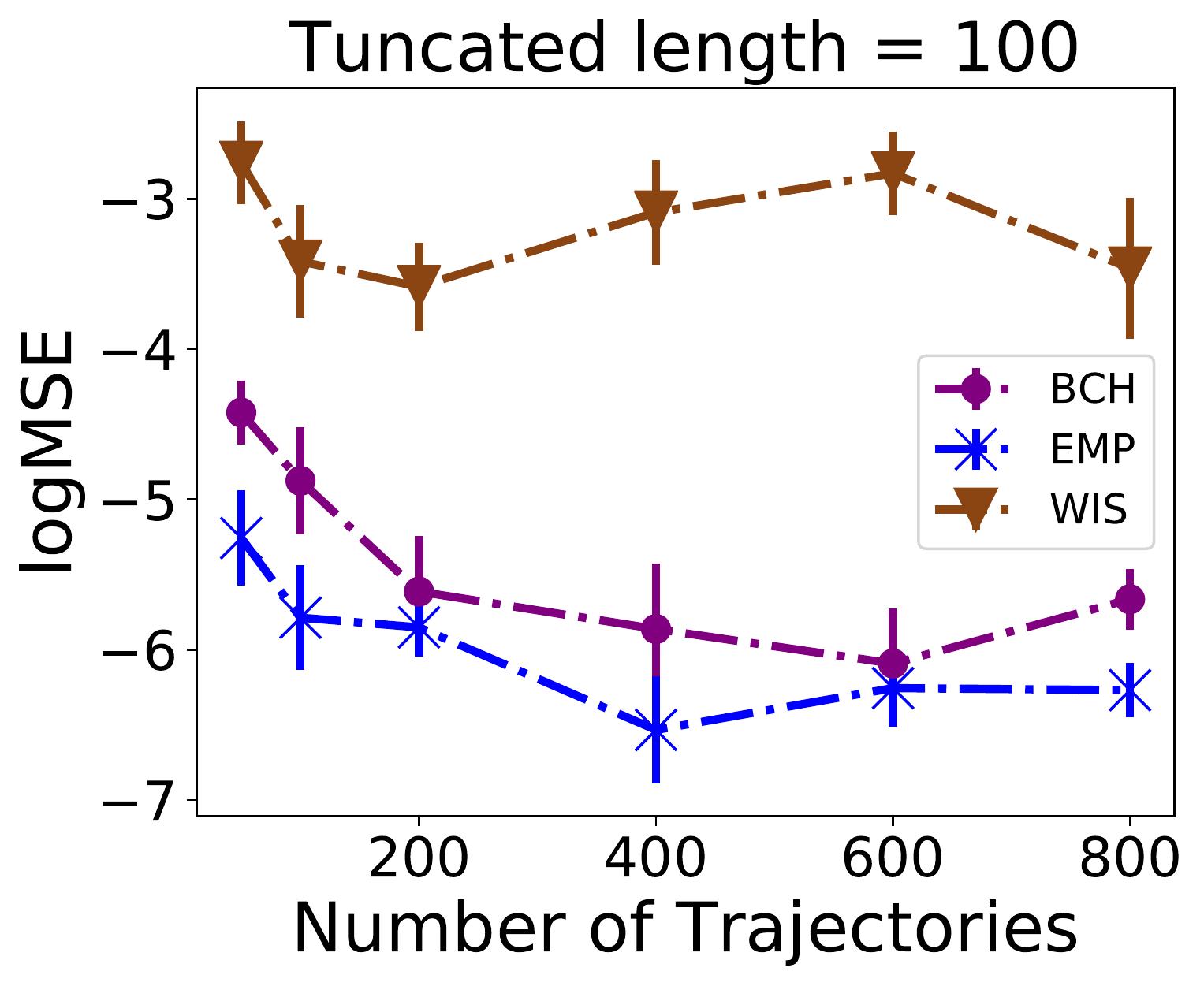}
		\label{fig:subf1}
		\includegraphics[ width=1.24in]{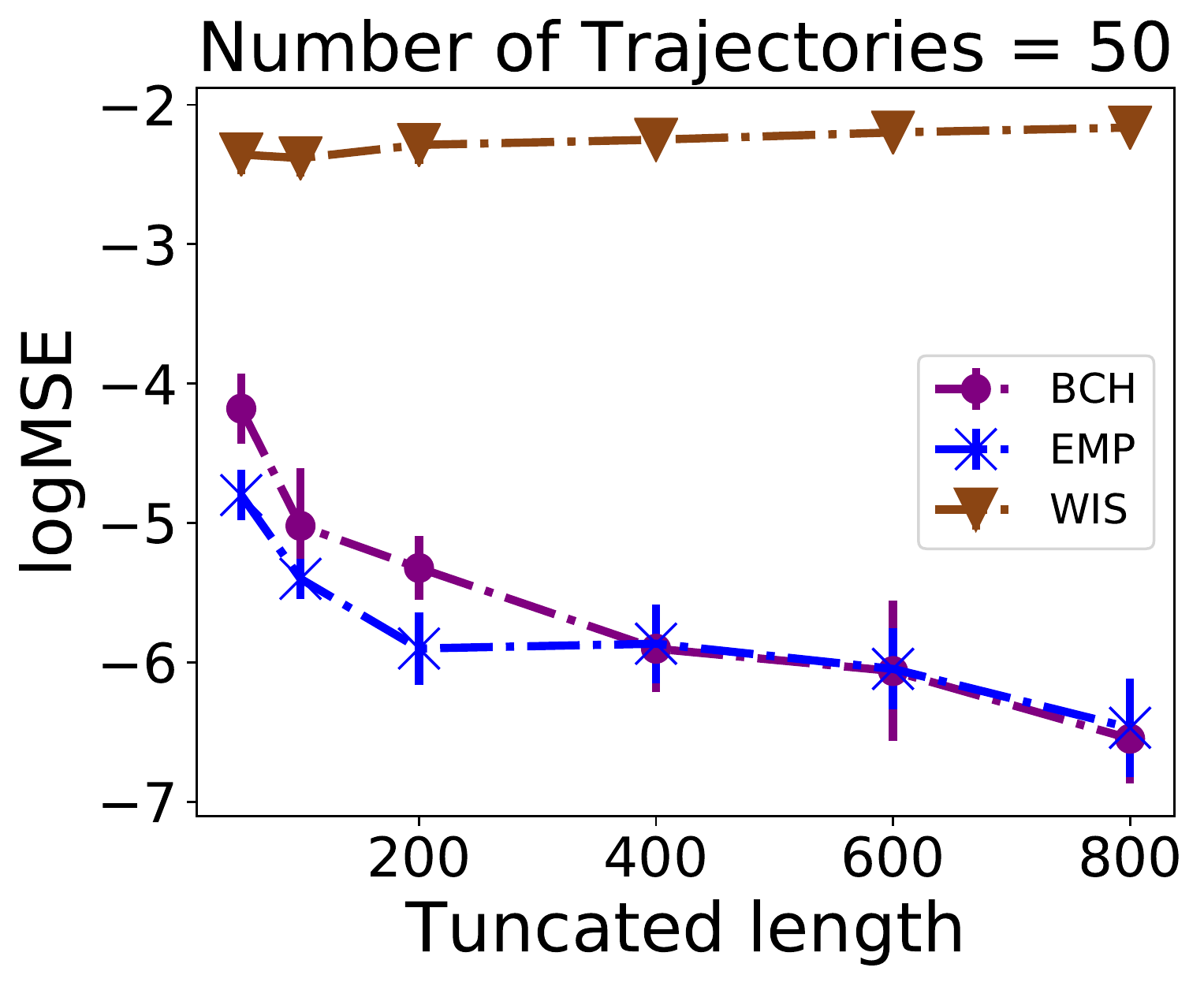}
		\label{fig:subfigure3}}

	\vspace{-.12in} \subfigure[Gridworld]{%
		\includegraphics[ width=1.24in]{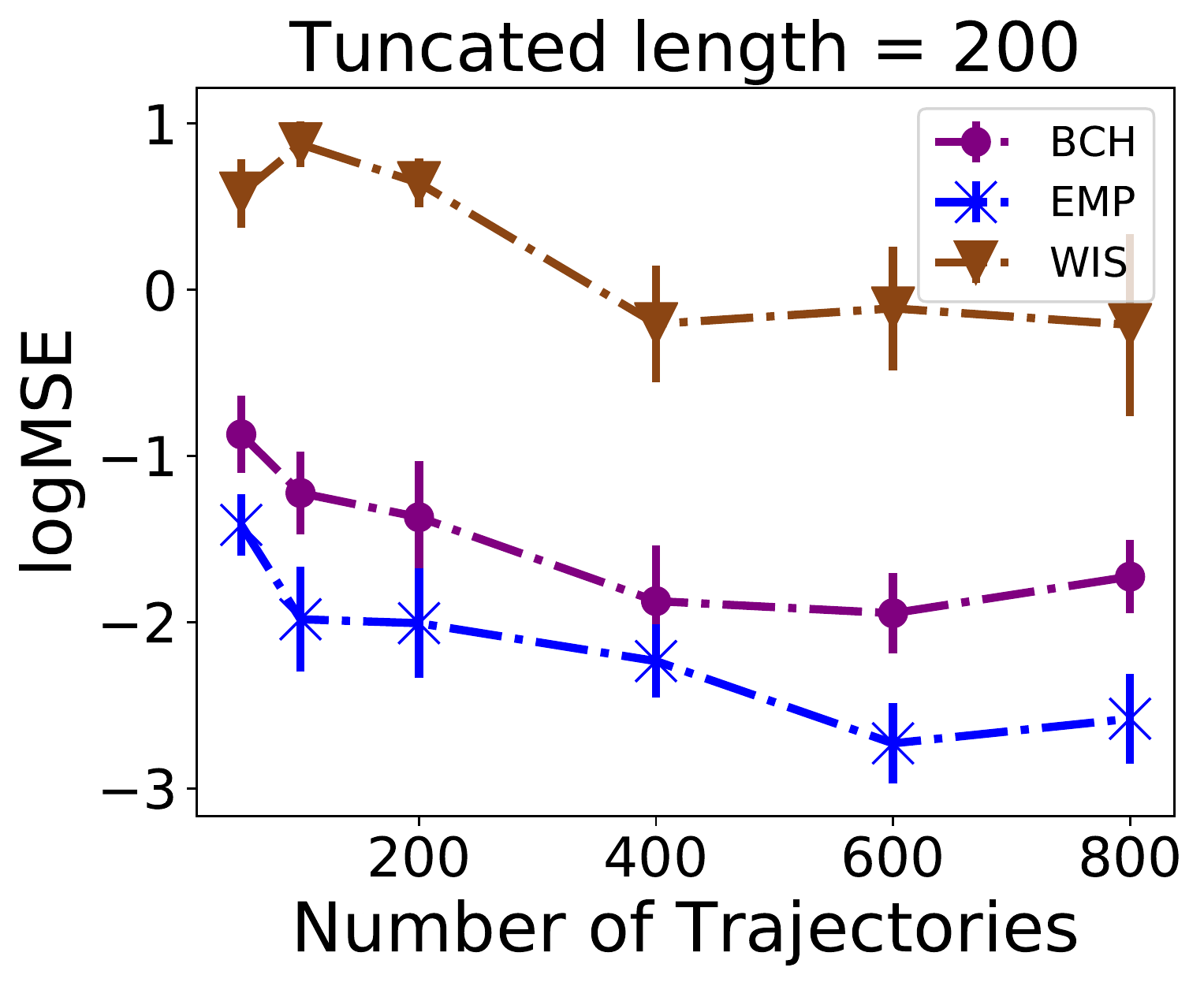}
		\label{fig:subfigure6}
		\includegraphics[ width=1.24in]{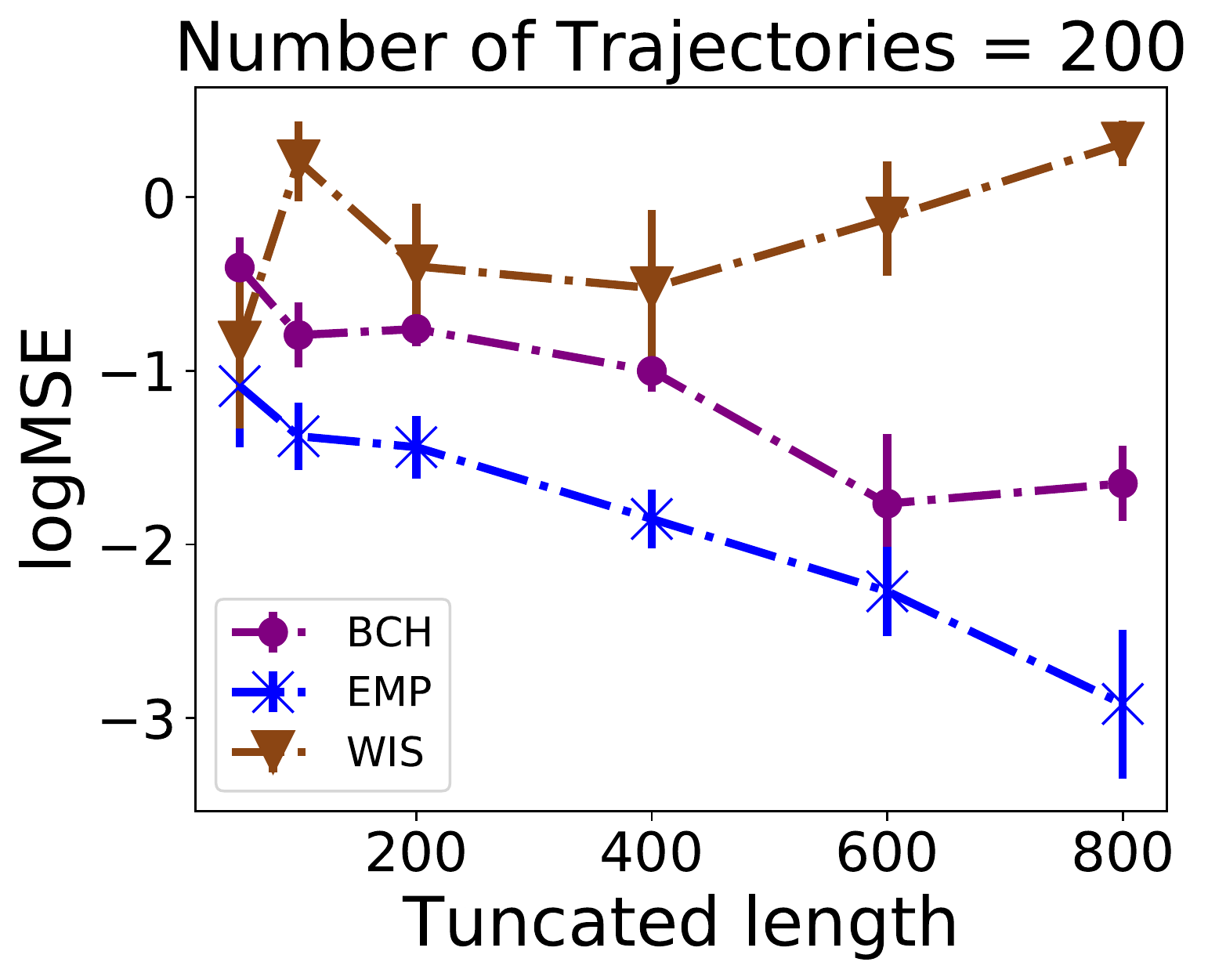}
		\label{fig:subfigure6}}
	\vspace{.08in}
	\subfigure[Pendulum]{%
		\includegraphics[width=1.3in]{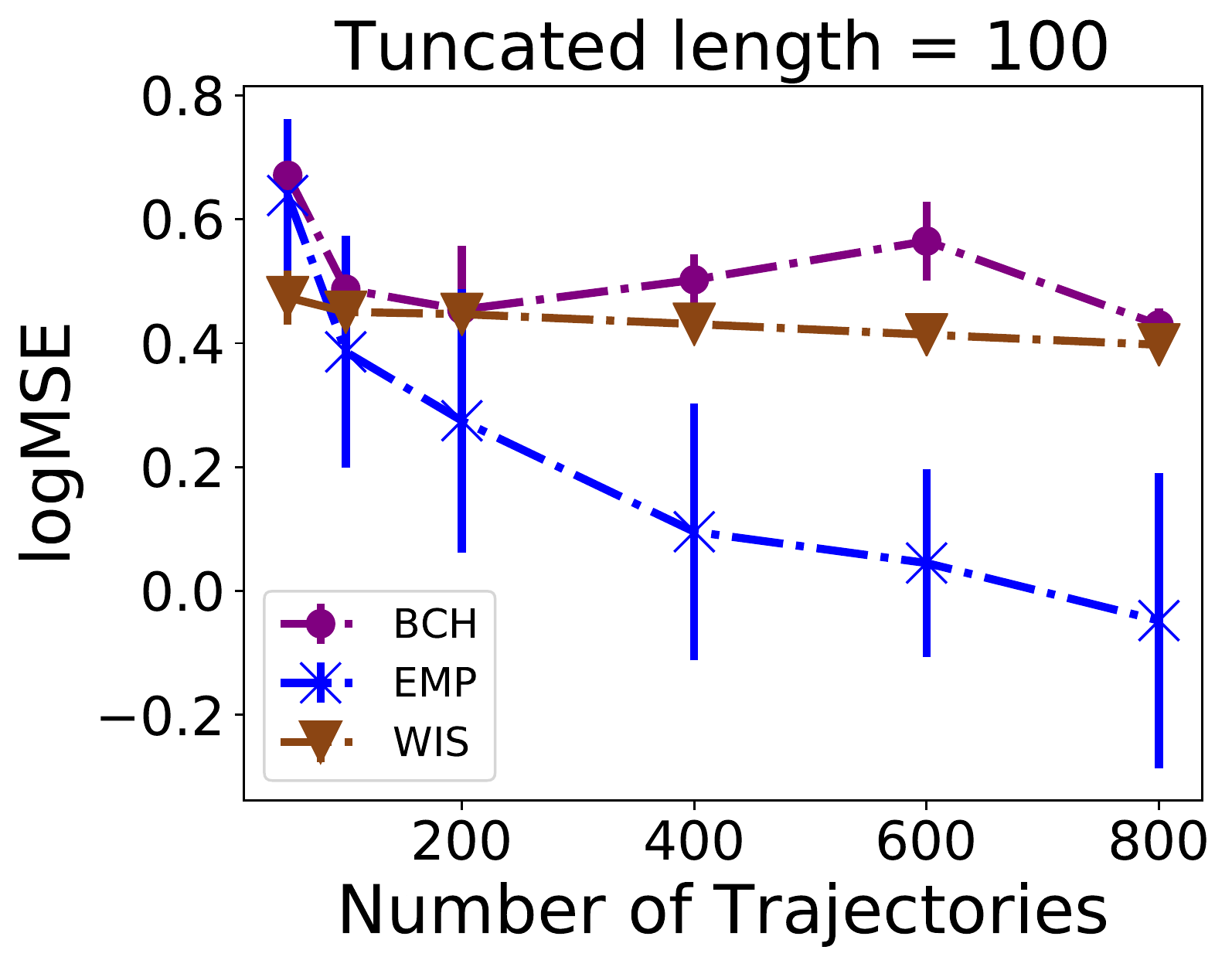}
		\label{fig:subfigure2}%
		\includegraphics[width=1.3in]{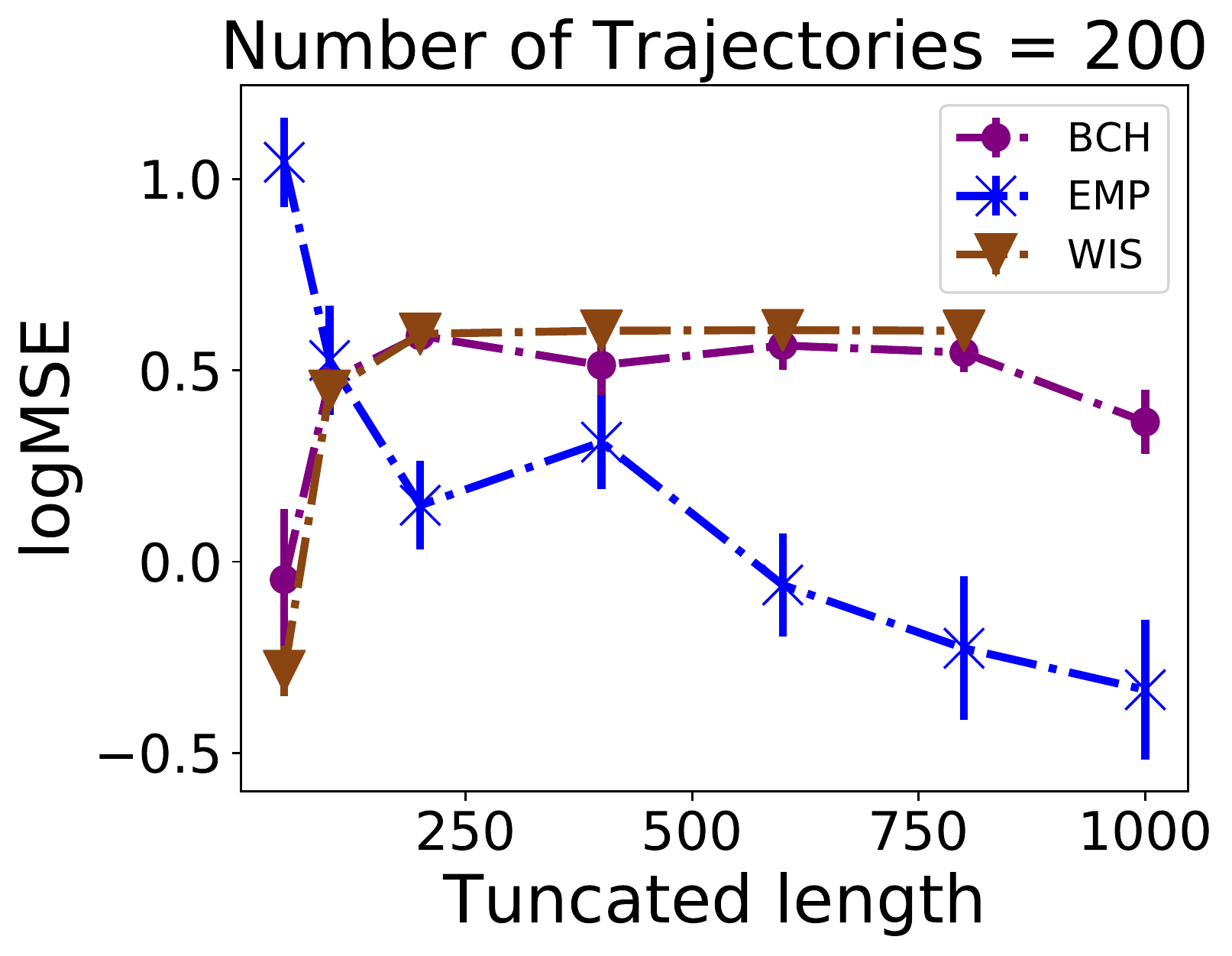}
		\label{fig:subfigure4}}
~
\vspace{-1em}
\caption{\label{single-com} Single-behavior-policy results of BCH, EMP and WIS across continuous and discrete environments with average reward. Each node indicates the mean value and the bars represents the standard error of the mean.}
\end{figure*}

\section{Experiment}\label{sec: experiment}

In this section, we conduct experiments in three discrete-control tasks Taxi, Singlepath, Gridworld and one continuous-control task Pendulum (see Appendix~\ref{app:env} for the details), with following purposes: (i) to compare the performance of distribution correction learning using policy-aware, policy-agnostic and partially agnostic-policy methods (in Sec~\ref{sec: experiment single}); (ii) to compare the performance of the proposed EMP with existing OPPE methods (in Sec~\ref{sec: experiment single} and~\ref{sec: experiment multiple}); (iii) to explore potential improvement of EMP methods (in Sec~\ref{sec: experiment multiple}). We will release the codes with the  publication of this paper for relevant study.

\subsection{Results for Single Behavior Policy}\label{sec: experiment single}
\vspace{-0.5em}
In this section, we compare the EMP method with the BCH method and step-wise importance sampling (IS) in the setting of single-behavior policy, i.e. the data is generated from a single behavior policy.

\textbf{Experiment Set-up.} 
A single behavior policy which is learned by a certain reinforcement learning algorithm~\footnote{We use Q-learning in discrete control tasks and Actor Critic in continuous control tasks.} for evaluating BCH and IS. This single behavior policy then generates a set of trajectories consisting of s-a-s-r tuples. These tuples are used to estimate the behaviour policy for EMP methods as well as estimating the stationary distribution corrections for estimating the average step reward of the target policy. 
\begin{figure*}[h]
 \centering
  \subfigure[Taxi]{%
 	\includegraphics[ width=1.22in]{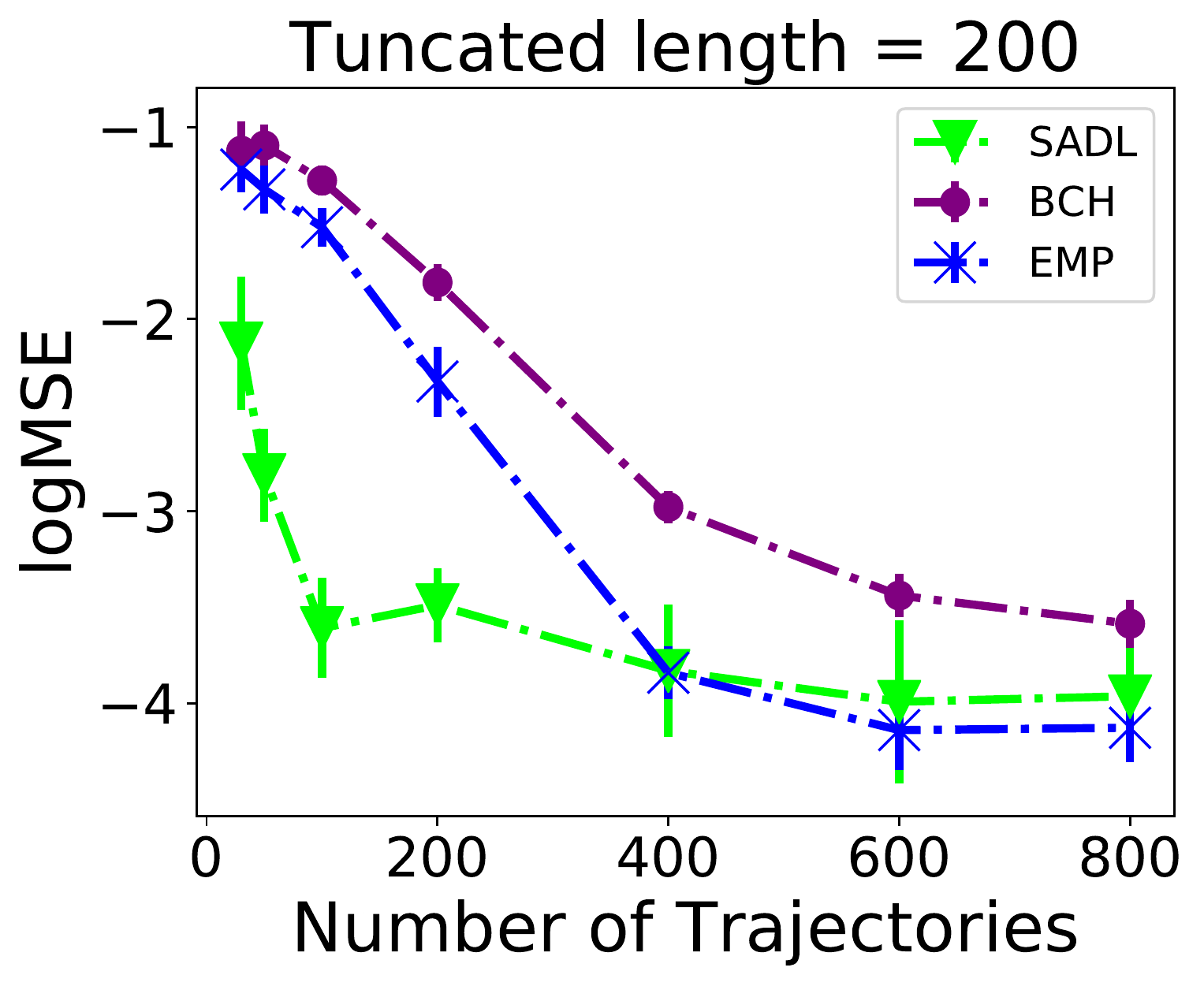}
 	\label{fig:subfigure3}
 	\includegraphics[ width=1.22in]{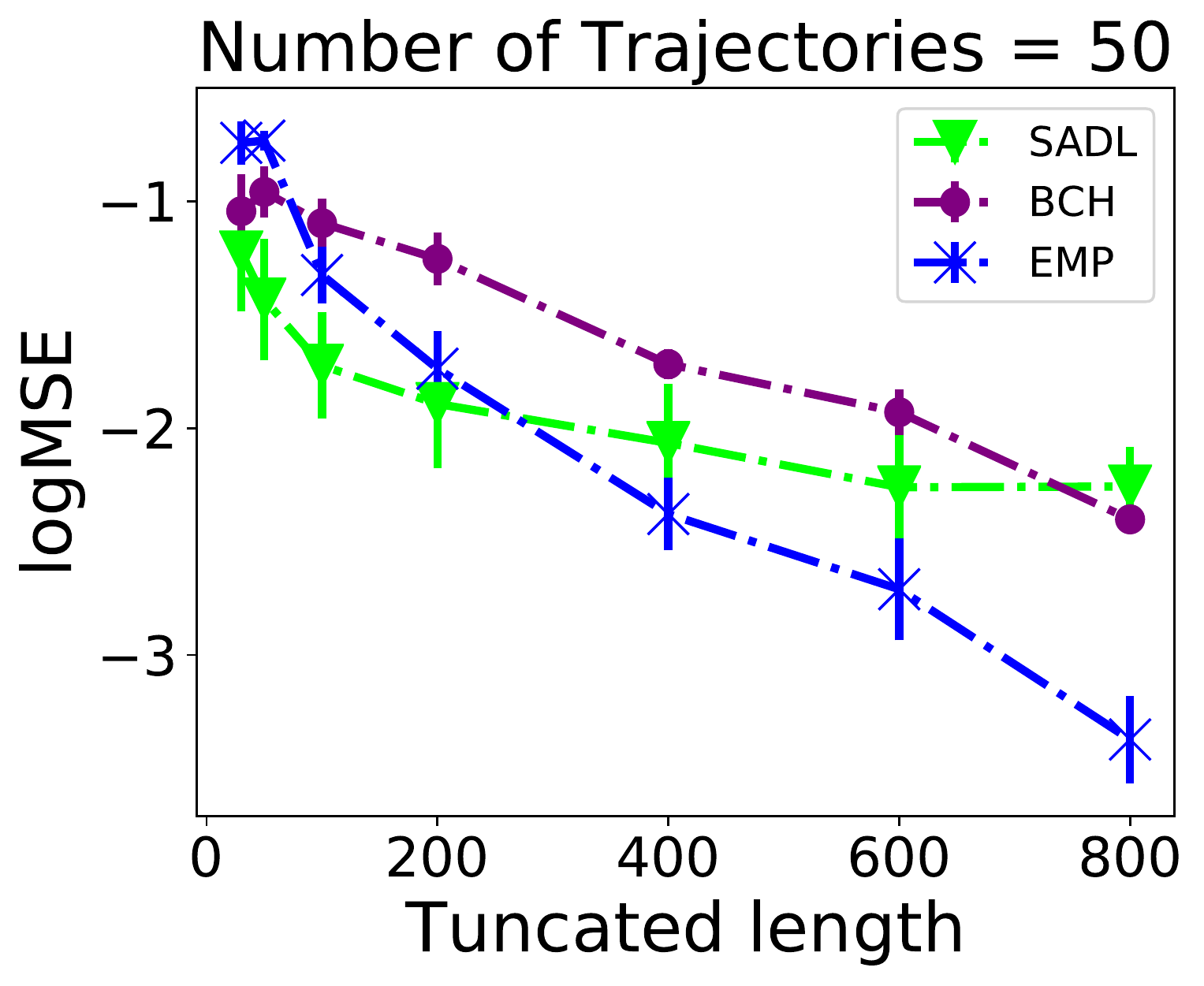}
 	\label{fig:subfigure1}}
 ~
   \subfigure[Pendulum]{%
 	\includegraphics[ width=1.24in]{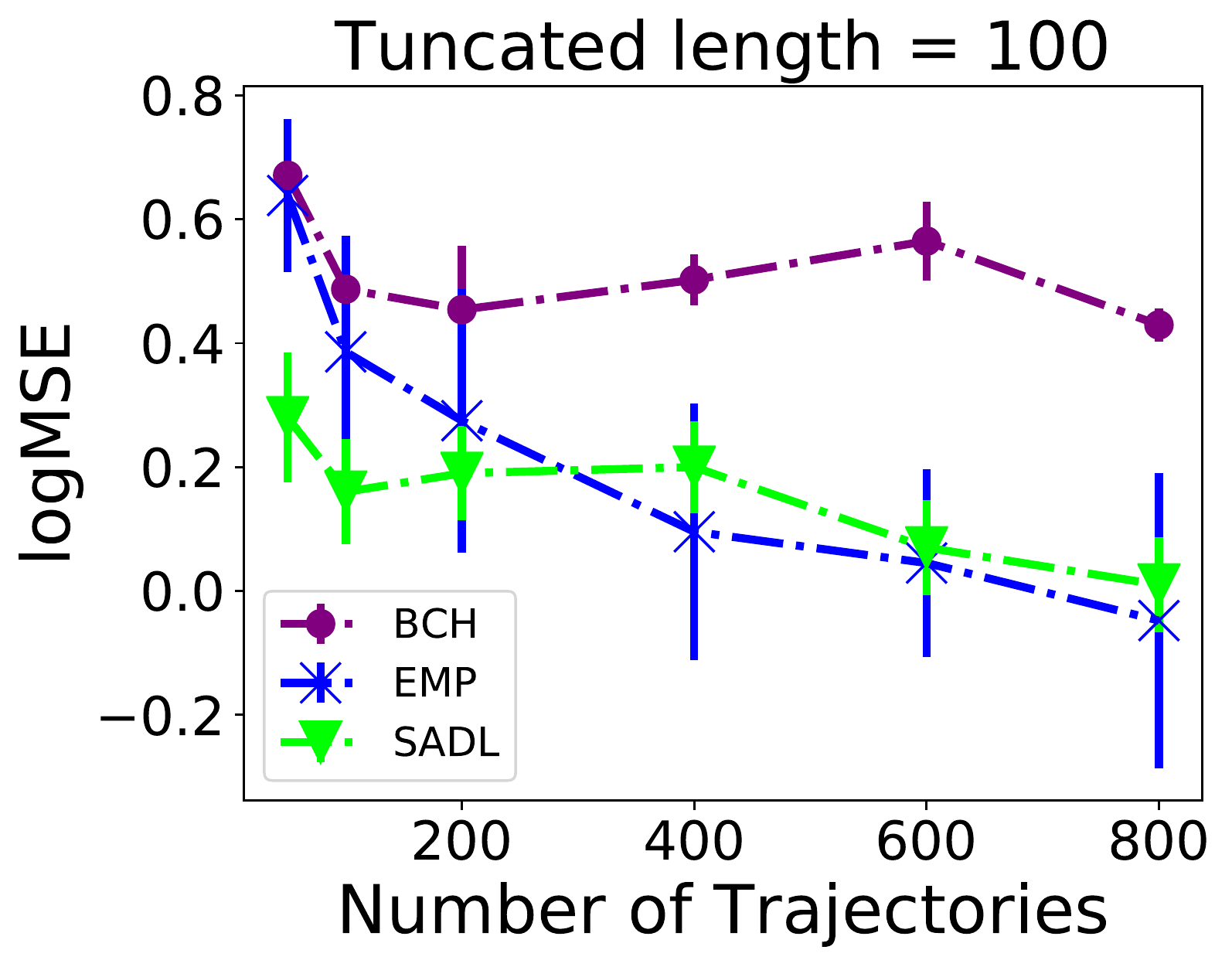}
 	\label{fig:subfigure3}
 	\includegraphics[ width=1.24in]{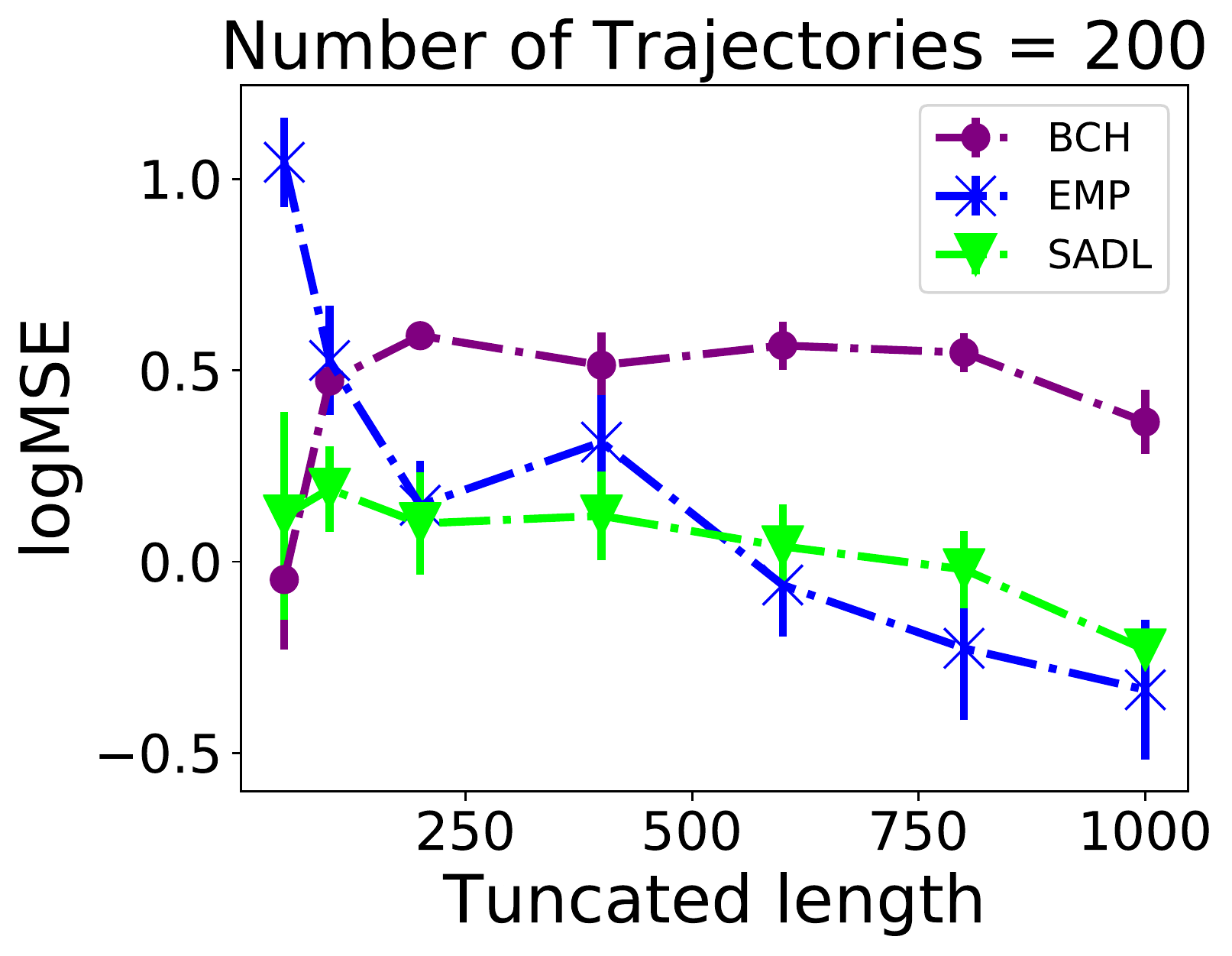}
 	\label{fig:subfigure1}}
 ~
 	\caption{Comparison results among policy-aware (BCH), partially policy-agnostic (EMP) and policy-agnostic (SADL) on continuous and discrete control tasks. }
 \label{scenario3}
\end{figure*}

\textbf{Stationary Distribution Learning Performance.}
We choose the Taxi domain as an example to compare the stationary distribution $\hat{d}_{\pi_\text{true}}$ and $\hat{d}_{\pi_\text{esti}}$ learned by BCH and EMP. Figure~\ref{fig:tv-subfigure3} shows the scatter pairs $(\hat{d}_{\pi_\text{true}},d_\pi)$ and $(\hat{d}_{\pi_\text{esti}},d_\pi)$ estimated by 200 trajectories of 200 steps.
It shows that $\hat{d}_{\pi_\text{esti}}$ approximate $d_\pi$ better than $\hat{d}_{\pi_\text{true}}$. Figure~\ref{fig:tv-subfigure1} and Figure~\ref{fig:tv-subfigure2} compare the TV distance from $\hat{d}_{\pi_\text{true}}$ and $\hat{d}_{\pi_\text{esti}}$ to $d_\pi$ under different data sample sizes. The results indicate that both $\hat{d}_{\pi_\text{true}}$ and $\hat{d}_{\pi_\text{esti}}$ converge, while $\hat{d}_{\pi_\text{esti}}$ converges faster and is significantly closer to $d_\pi$ when the data size is small. These observations are well consistent with Theorem~\ref{thm: BCD estimated}.

\begin{figure*}[h]
	\centering
	\hspace{-1.8em}
	\subfigure[Taxi]{%
		\includegraphics[ width=1.24in]{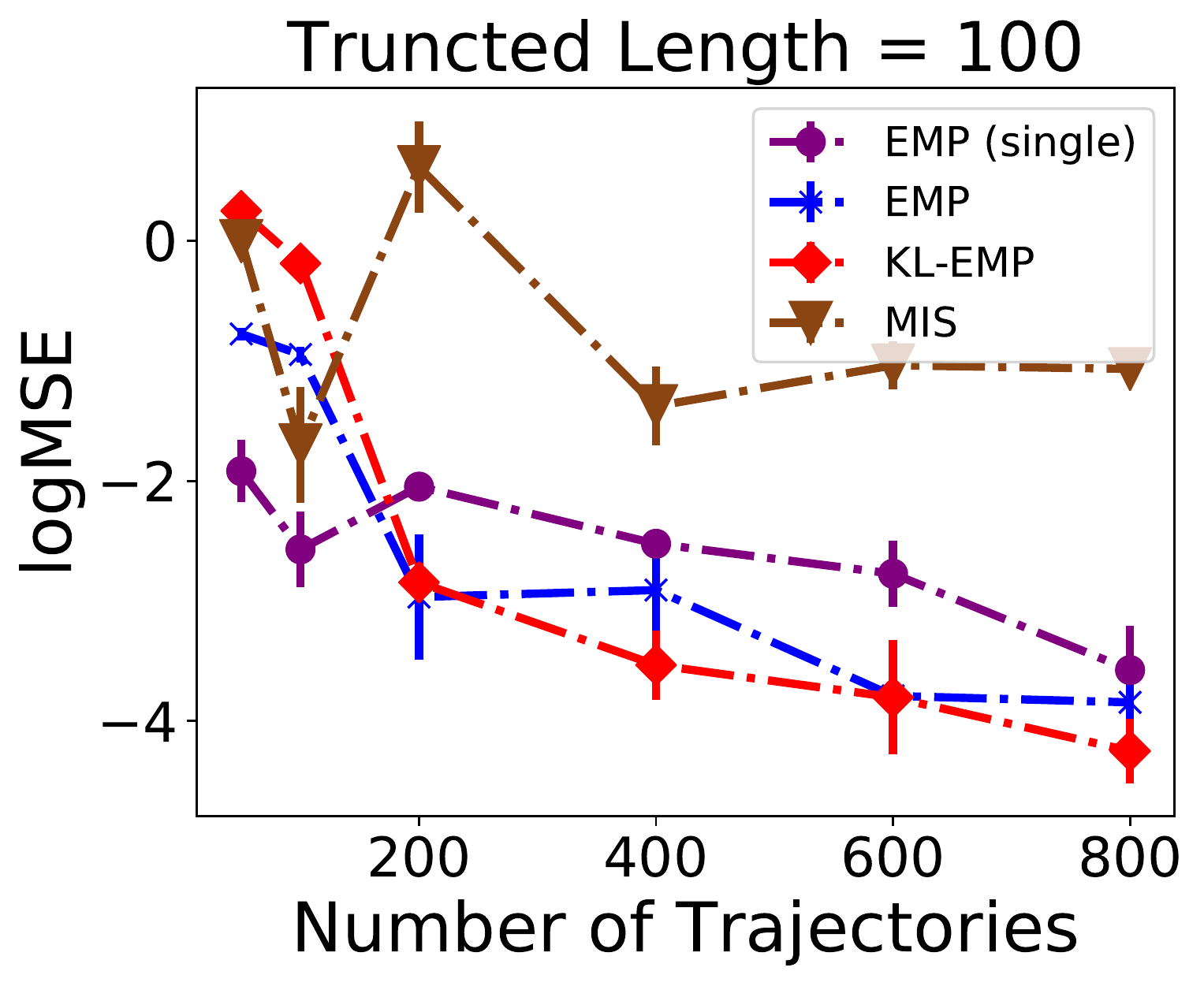}
		\label{fig:subfigure5}
		\includegraphics[ width=1.24in]{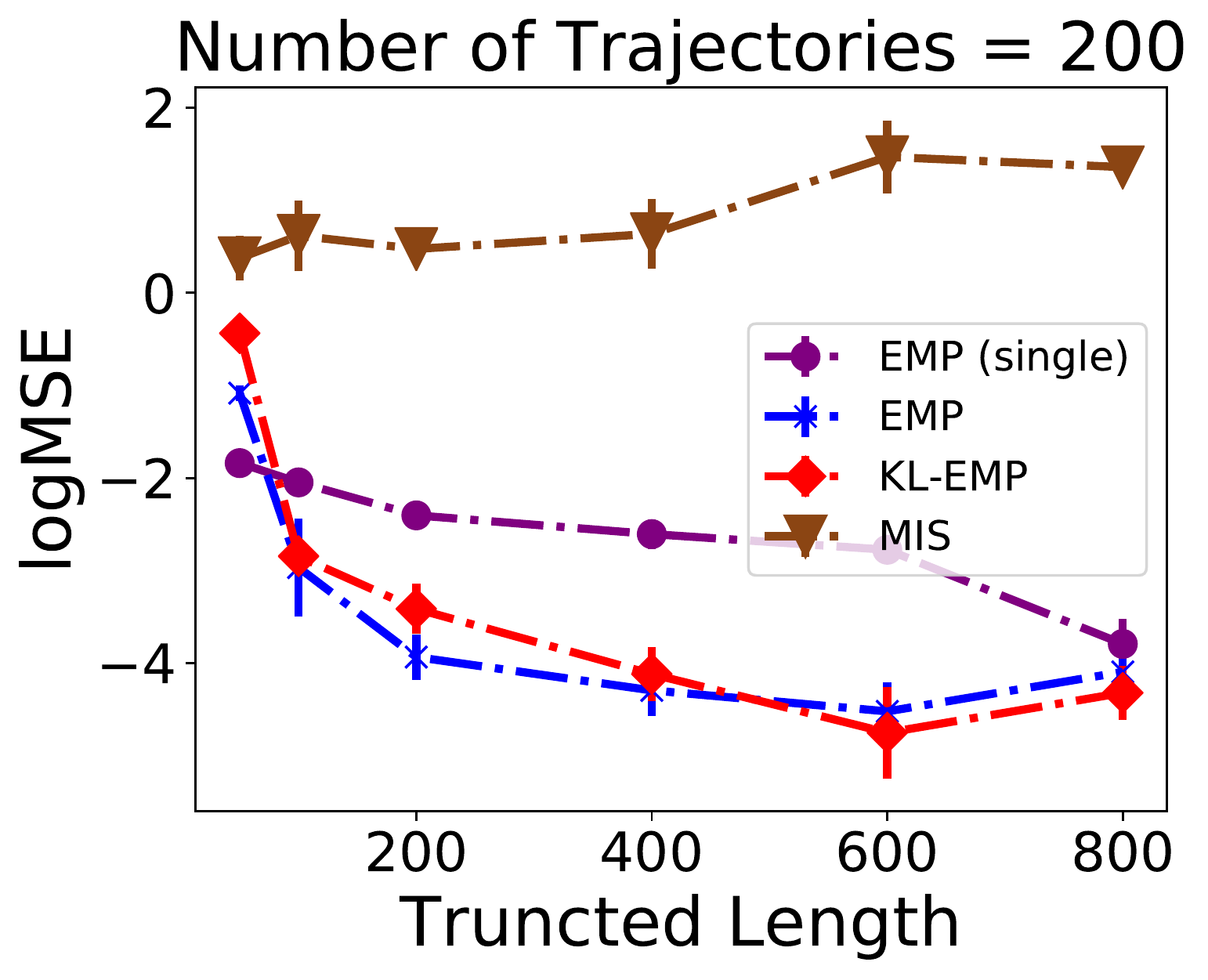}
		\label{fig:subfigure5}}
~
	\subfigure[Singlepath]{%
		\includegraphics[width=1.28in]{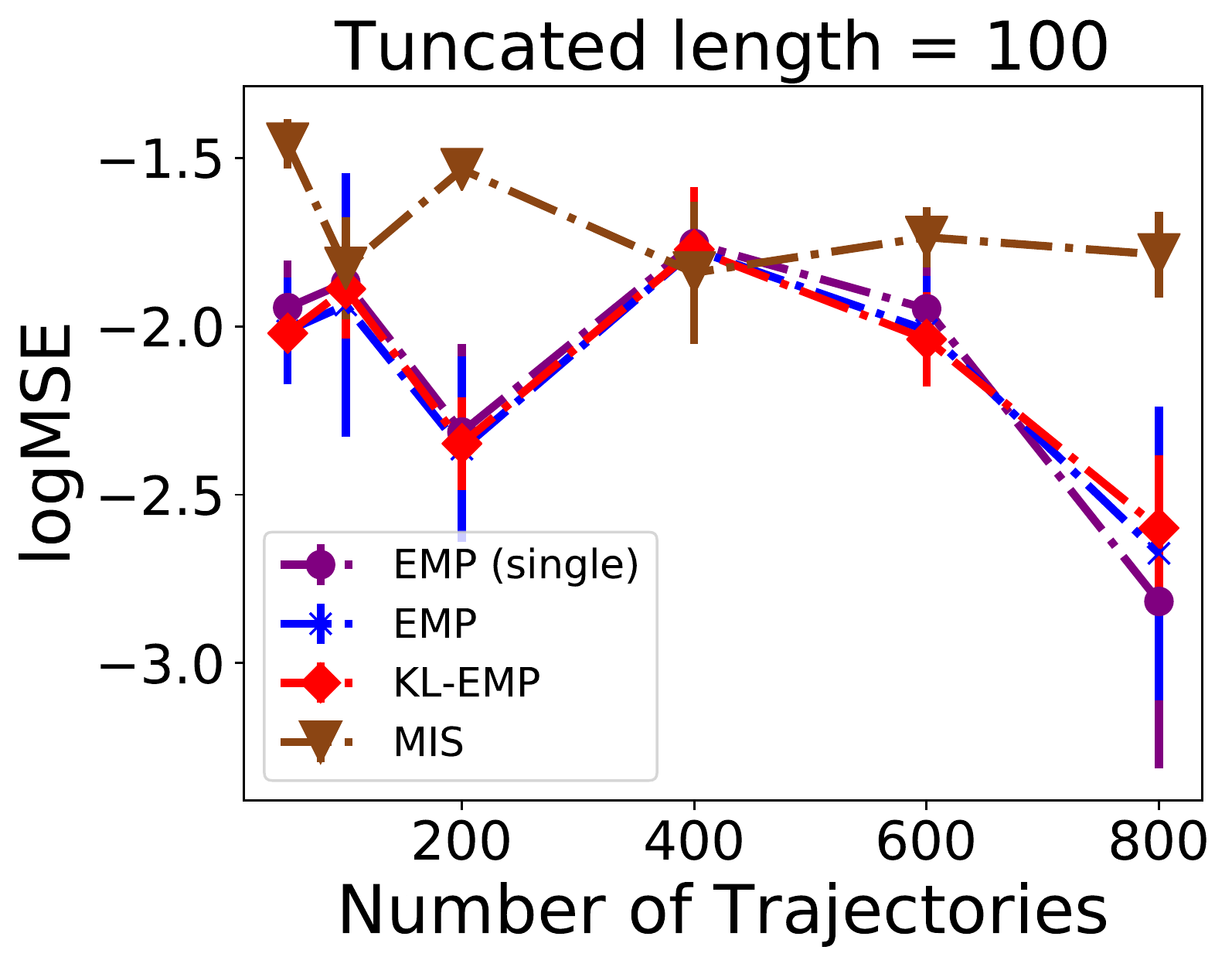}
		\label{fig:subf1}
		\includegraphics[ width=1.28in]{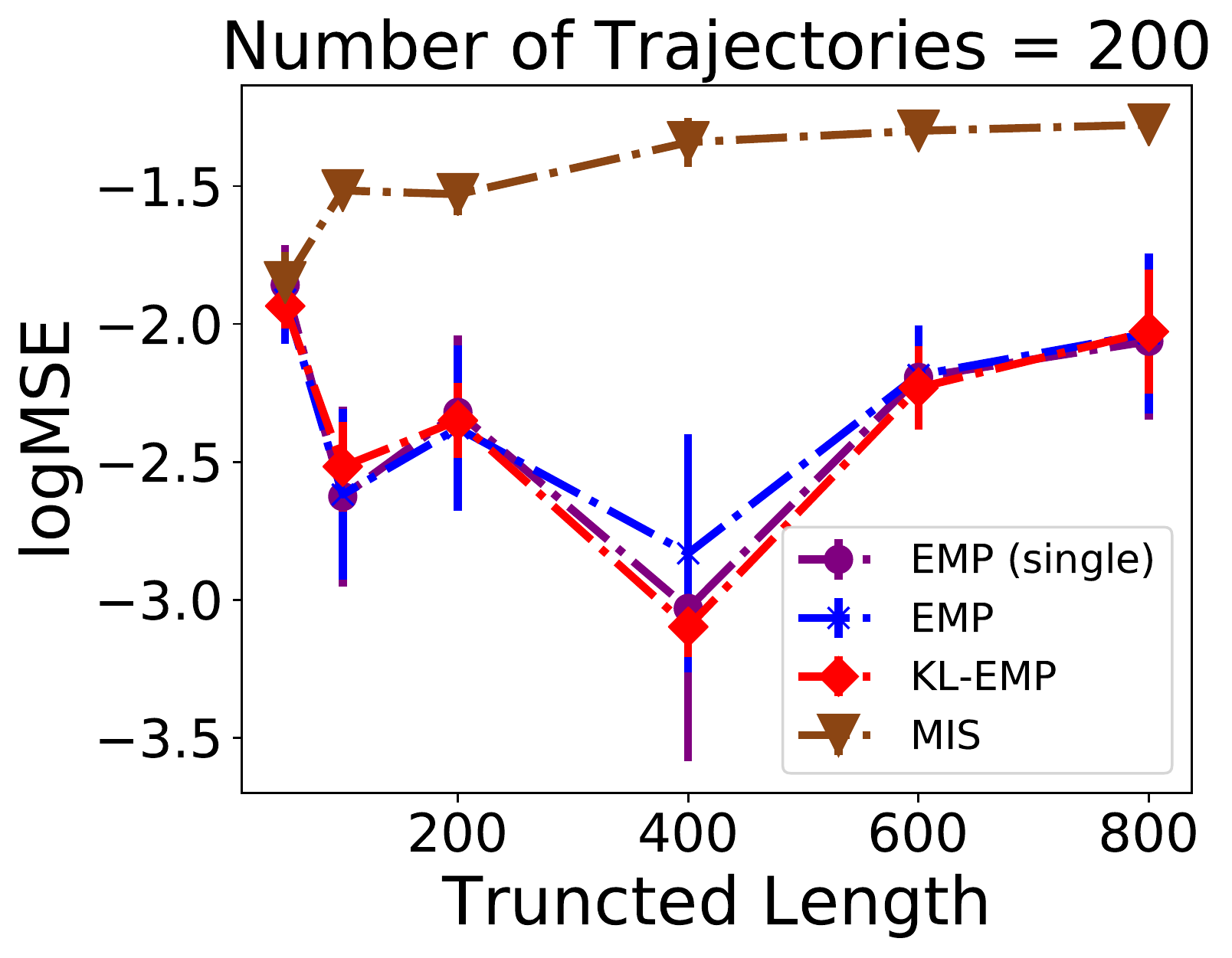}
		\label{fig:subfigure3}}

	\vspace{-.12in}
	\subfigure[Gridworld]{%
		\includegraphics[ width=1.2in]{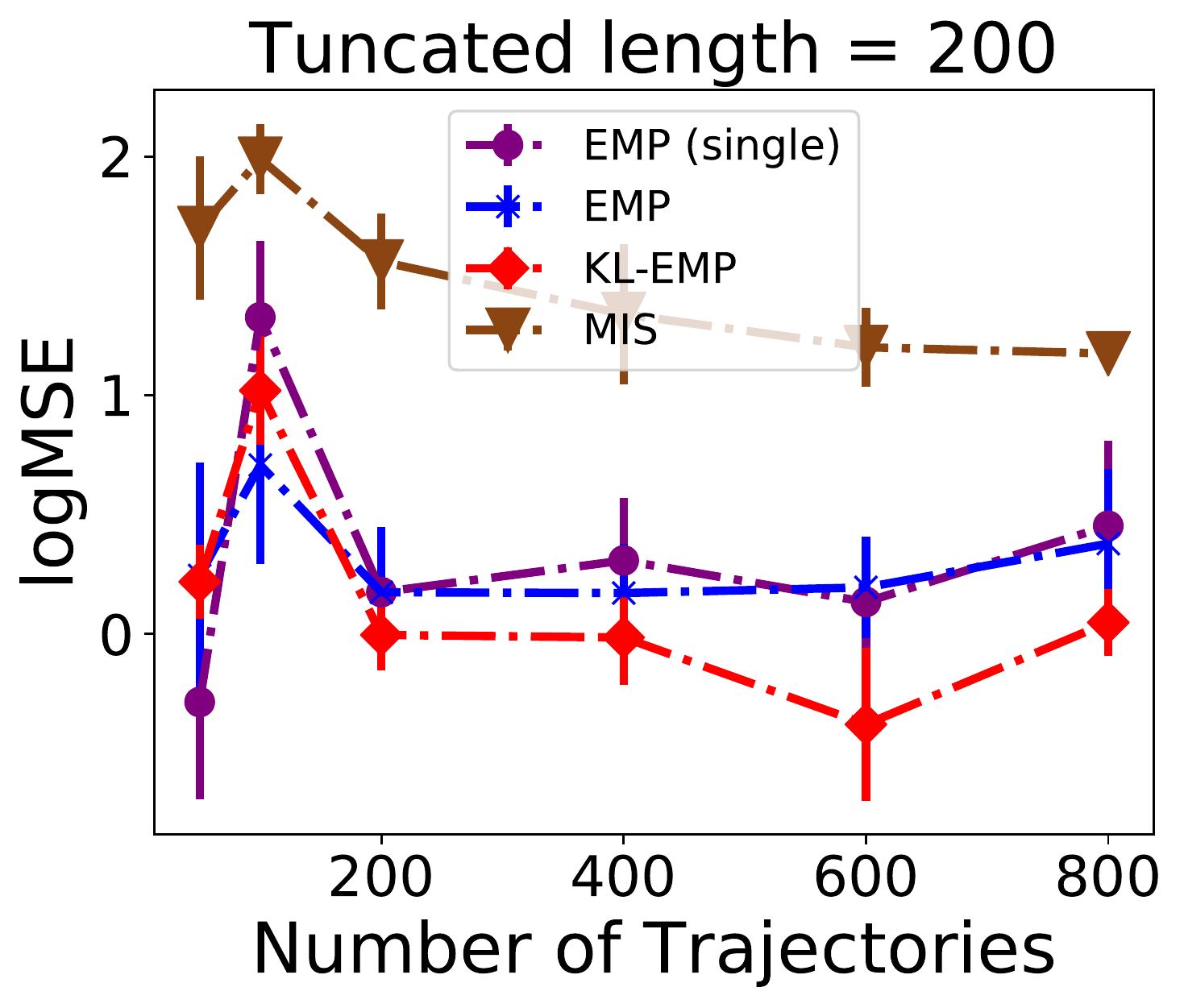}
		\label{fig:subfigure6}
		\includegraphics[ width=1.2in]{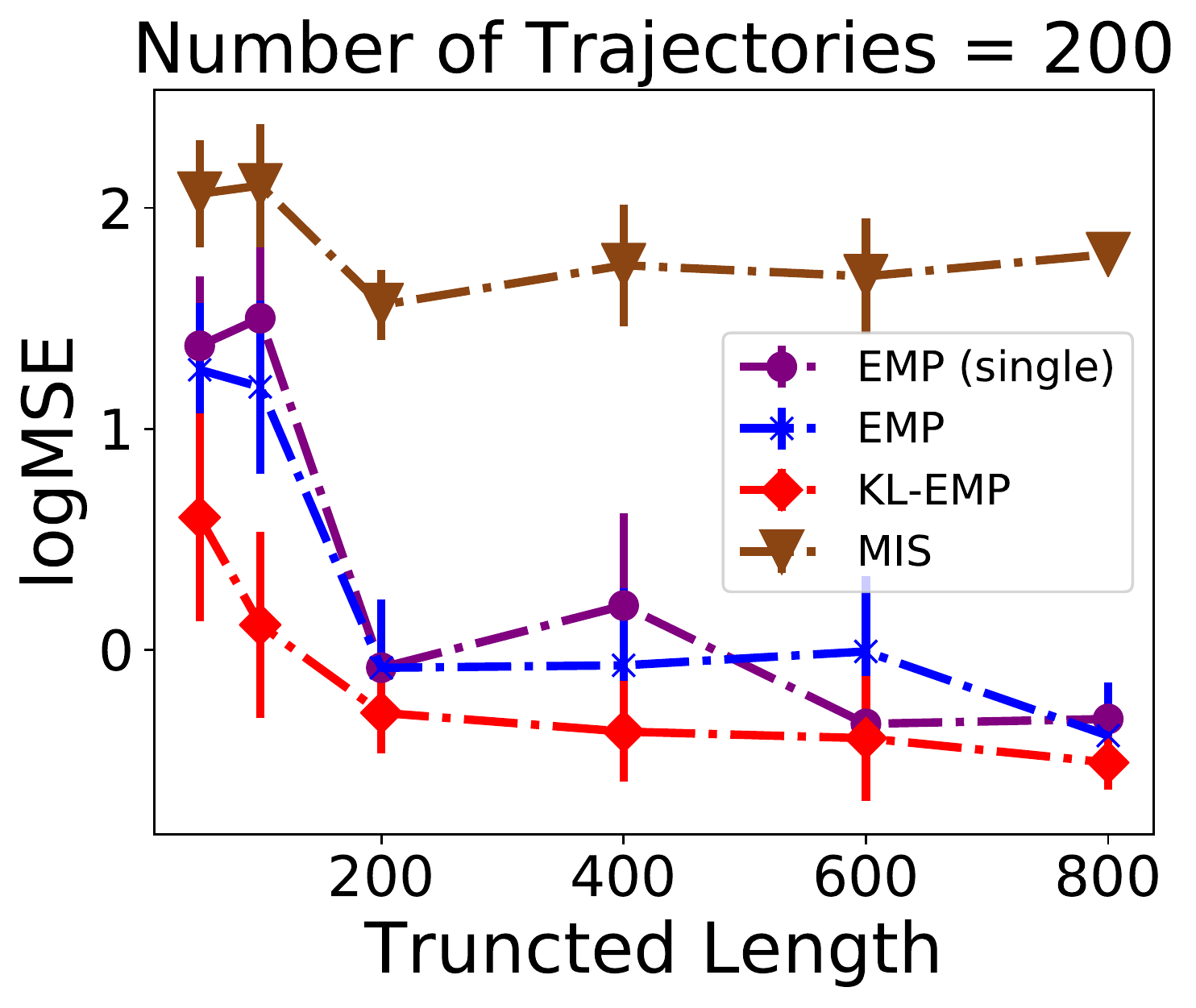}
		\label{fig:subfigure6}}
~
	\subfigure[Pendulum]{%
		\includegraphics[width=1.3in]{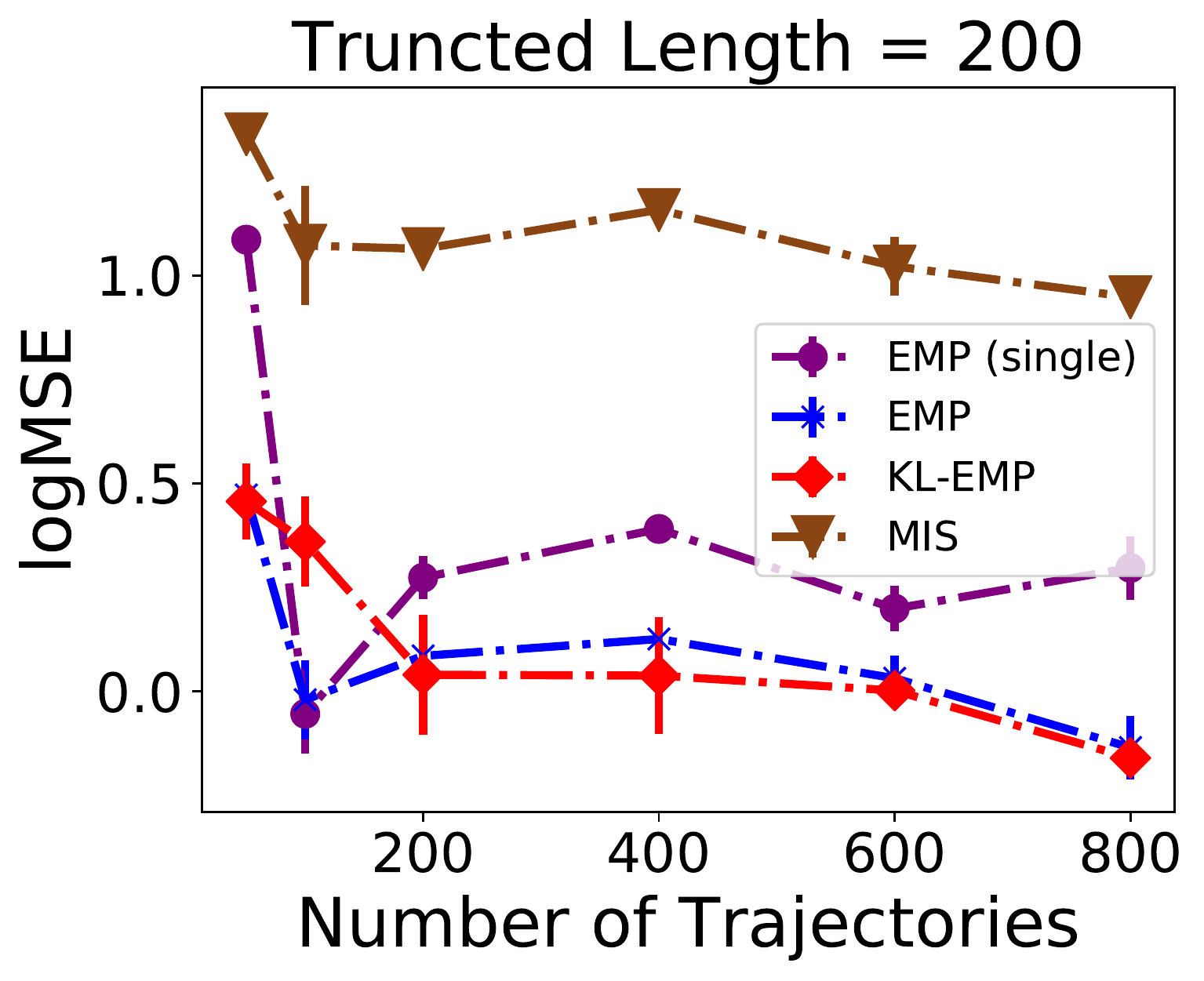}
		\label{fig:subfigure2}
		\includegraphics[width=1.3in]{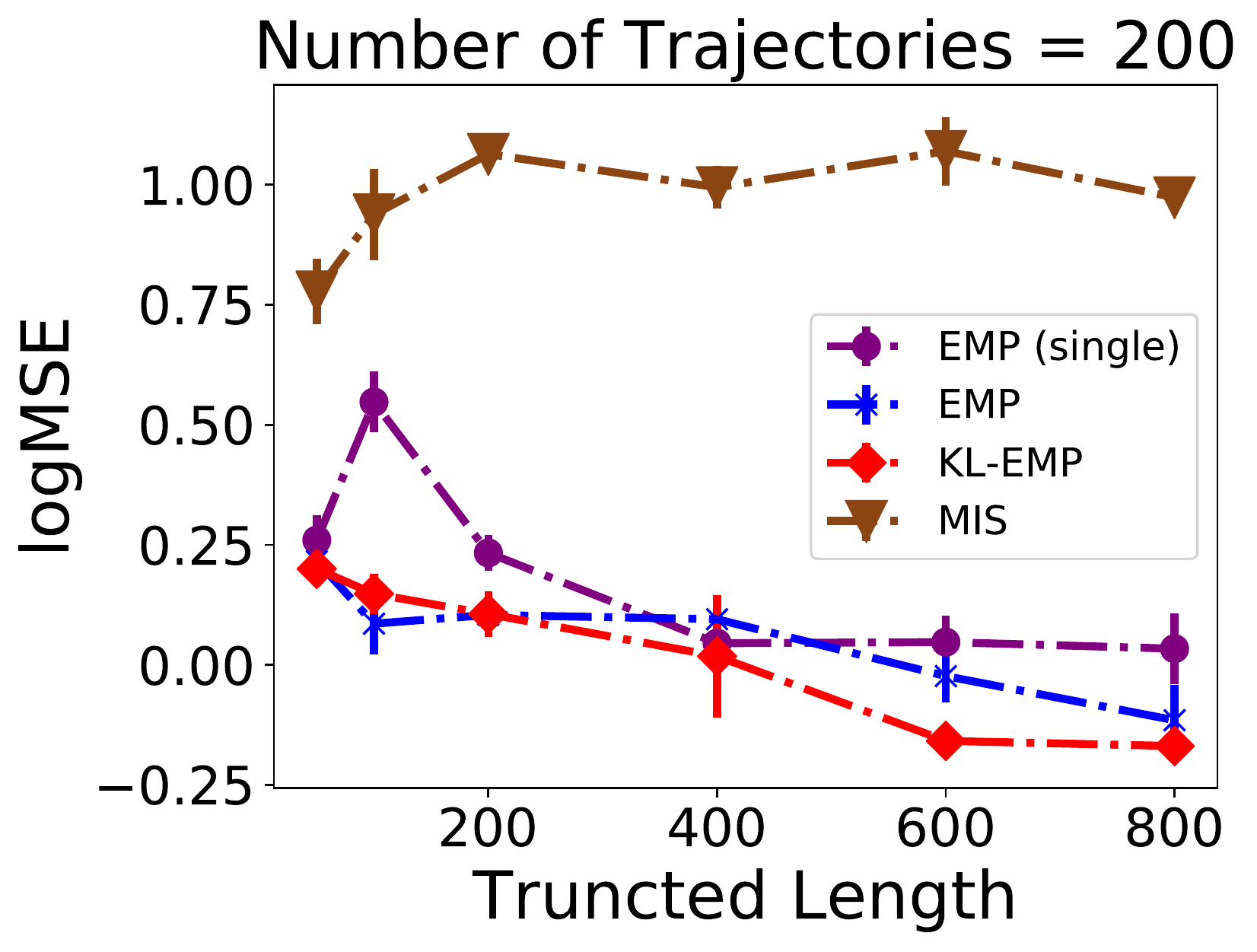}
		\label{fig:subfigure4}}
~
\vspace{-1em}
\caption{\label{fig:mis} Multiple-behavior-policy results of EMP (single), EMP, KL-EMP and MIS across continuous and discrete environments with average reward.}
\end{figure*}



\textbf{Policy Evaluation Performance.} 
 Figure~\ref{single-com} reports the MSE of policy evaluation by EMP, BCH and IS methods for the 4 different environments. We observe that, (i) EMP consistently obtains smaller MSE than the other two methods for different sample scales and different environments. (ii) The performance of EMP and BCH improves as the number of trajectories and length of horizons increase, while the IS method suffers from growing variance.  
our method correctly estimates the true density ratio over the state space. 
 
\textbf{Partially Policy-agnostic versus Policy-agnostic OPPE.}
Figure~\ref{scenario3} reports the comparison results for the \textit{policy-aware} BCH,  \textit{partially policy-agnostic} EMP and a \textit{policy-agnostic} method, which we call it state-action distribution learning (SADL) and whose formal formulation is given in Appendix \ref{appdx: sadl}. The results show that all three methods obtain improvement as the number of length of trajectories increase. Roughly speaking, both EMP and SADL outperform BCH. The policy-agnostic SADL is better than EMP in the cases of small sample size. But when the sample size increases so that the estimated behavior policy is more accurate, EMP gradually exceeds SADL.

\textbf{Remark: } In our implementation of SADL, we use the same min-max formulation and optimization solver as EMP so that the comparison could shed more lights on the impact of behavior policy information on the performance of off-policy policy evaluation. 
We will report the comparison result between EMP and DualDice once the code is released.

\subsection{Results for Multiple Unknown Behavior Policies} \label{sec: experiment multiple}

As for multiple behavior policies, we conduct experiments in policy-aware and partially policy-agnostic settings. We report the results of partially policy-agnostic setting in this section and the policy-aware setting is described in  Appendix~\ref{appdix:known_unknown}. Because partially policy-agnostic version consistently achieves better performance. 

\textbf{Experiment Set-up.} We implement the following 4 methods: (1) the proposed EMP; (2) the multiple importance sampling (MIS) method as in \cite{Tirinzoni2019} using balanced heuristics; (3) EMP (single), in which we apply EMP for each subgroup of samples generated by a single behavior policy to obtain one OPPE value and finally output their average; (4) KL-EMP, which is an ad-hoc improvement of EMP using more information on the behavior policies and whose implementation details are given in Appendix \ref{append:kl}. 

\textbf{Policy Evaluation Performance.} Figure~\ref{fig:mis} reports the log MSE of the 4 methods in different environments with different sample scales. It shows that the proposed EMP outperforms both MIS and EMP (single). It is interesting to note that in EMP (single), actually more information on the behavior policies is learned than in EMP, but the learned stationary distribution corrections are mixed with naively equal weights. So, the advantage of EMP over EMP (single) can be probably attributed to (1) the robustness due to less required information on behavior policies; (2) a near-optimal weighted average that is automatically learned by pooling together the samples from different behavior policies. 

On the other hand, we see that the performance of KL-EMP has greater improvement with the increase of sample size and eventually outperform EMP in cases of large sample size. This is because, KL-EMP replaces the fixed sample proportion (i.e. $w_j$ as defined in Section \ref{sec: mis analysis}) with a KL divergence-based proportion, which is better estimated with more data sample.

\section{Conclusion}

In this paper, we advocate the viewpoint of partial policy-awareness and the benefits of estimating a mixture policy
for off-policy policy evaluation. The theoretical results of reduced variance coupled with experimental results illustrate
the power of this class of methods. One key question that still remains is the following: if we are willing to estimate the individual behavior policies, can we further improve EMP by developing an efficient algorithm to compute the optimal weights? One other question is a direct comparison of DualDice and EMP when the code of DualDice is released, this will allow us to see the props and cons of
inductive bias offered by the Bellman equation used by DualDice and direct estimation of the mixture policy used by EMP.

\newpage
\bibliography{iclr2020_conference}
\bibliographystyle{iclr2020_conference}

\appendix
\newpage
\section{Kernel Method}\label{appdx: kernel}


We use the reproducing kernel Hilbert space to solve the mini-max problem of BCH~(\cite{liumethod}). The key property of RKHS we leveraged is called \textbf{reproducing property}. The reproducing property claims, for any function $f \in \mathcal{H}$ ($\mathcal{H}$ is a RKHS), the evaluation of $f$ at point x equals its inner product with another function in RKHS: $f(s) =\langle f,k(s,\cdot)\rangle_\mathcal{H}$.

Given the objective function of BCH $L(w,f) =\mathbb{E}_{(s,a,s')\sim d_{\pi_0}}[(\omega(s)\frac{\pi(a|s}{\pi_0(a|s)-\omega})f(s')]$. We use the reproducing property to obtain the closed form representation of $\max _{f \in \mathcal{F}} L(w, f)^{2}$, which is shown as follows:

$$\max _{f \in \mathcal{F}} L(w, f)^{2}=\mathbb{E}_{(s,a,s')\sim d_{\pi_0},(\bar s,\bar a,\bar s')\sim d_{\pi_0}}\left[\Delta\left(\omega ; s, a, s^{\prime}\right) \Delta\left(w ; \overline{s}, \overline{a}, \overline{s}^{\prime}\right) k\left(s^{\prime}, \overline{s}^{\prime}\right)\right]$$\\.


This equation has been proved in BCH~\cite{liumethod}.

\section{Proof of Theorem \ref{thm: BCD estimated}}\label{appdx: thm1}
\subsection{Assumptions}
In this appendix, we provide the mathematical details and proof of Theorem \ref{thm: BCD estimated}.
We first introduce some notations and assumptions. 

We assume the behavior policy $\pi_0(a|s)$ belongs to a class of policies $\Pi = \{\pi(\theta; a,s):\theta \in \mathcal{E}_\theta\}$, where $\mathcal{F}_\theta$ is the parameter space, i.e. there exists $\theta_0\in E_1$ such that $\pi_0(a|s) =\pi(\theta_0;a,s)$. The estimated behavior policy $\hat{\pi}_0=\pi_{\hat{\theta}}$ is obtained via maximum likelihood method, i.e. 
$$\hat{\theta}=\arg\max \sum_{n=0}^{N-1}\log(\pi(\theta; s_{n}, a_n)).$$
We assume central limit theorem holds for $\hat{\theta}$.
Recall that we have assumed in Section \ref{sec: BCD estimated} that the true stationary distribution correction $\omega(s)=\omega(\eta_0;s)$. Using the kernel method introduced in Appendix \ref{appdx: kernel}, our estimation $\hat{\omega}(s)=\omega(\hat{\eta};s)$ is obtained via
$$\min_\eta \sum_{0\leq i, j\leq N-1}G(\eta,\hat{\theta};x_i,x_j),$$
with $x_i = (s_i,a_i,s'_i)$ and
$$G(\eta,\hat{\theta};(x_i,x_j))=\left(\omega(\eta;s_i)\frac{\pi(a_i|s_i)}{\pi(\hat{\theta},a_i,s_i)}-\omega(\eta,s'_i)\right)\left(\omega(\eta;s_j)\frac{\pi(a_j|s_j)}{\pi(\hat{\theta},a_j,s_j)}-\omega(\eta,s'_j)\right)k(s'_i,s'_j).$$

\begin{assumption} \label{assmp: G} We assume the following regularity conditions on $G$:
	\begin{enumerate}
		\item $G$ is second order differentiable.
		\item $\mathbb{E}[\partial_\eta\partial_\theta G(\eta_0,\theta_0;x_i,x_j)]$ is finite. 
		\item  $\mathbb{E}[\partial^2_\eta G(\eta_0,\theta_0;x_i,x_j)]$ is finite and non-zero.
		\item $\mathbb{E}[\partial_\eta G(\eta_0,\theta_0;x_i,x_j)^2]$ is finite. 
	\end{enumerate}
\end{assumption}
Here we simply write $\mathbb{E}_{x_i\sim d_{\pi_0}, x_j\sim d_{\pi_0} }$ as $\mathbb{E}$ for the simplicity of notation.
\subsection{Proof of Theorem \ref{thm: BCD estimated}}
\begin{proof}
	Following the kernel method,
	\begin{align*}\hat{\eta}&=\arg\min_{\eta} =\arg\min_\eta
	\sum_{0\leq i,j\leq N-1}G(\eta,\hat{\theta};(x_i,x_j))\text{ with }x_i = (s_i,a_i,s'_i)\text{ and }s'_i \triangleq s_{i+1}.
	\end{align*}
	Then, $\sum_{1\leq i, j\leq N}\partial_\eta G(\hat{\eta},\hat{\theta};(x_i,x_j))=0$, we have
	\begin{align*}
	0 =& \frac{1}{N\sqrt{N}}\sum_{0\leq i, j\leq N-1}\partial_\eta G(\eta_0,\theta_0;(x_i,x_j))+\sqrt{N}(\hat{\eta}-\eta_0)\frac{1}{N^2}\sum_{0\leq i, j\leq N-1}\partial^2_\eta G(\eta_0,\theta_0;(x_i,x_j)) \\
	& +
	\sqrt{N}(\hat{\theta}-\theta)\frac{1}{N^2}\sum_{0\leq i, j\leq N-1}\partial_\theta\partial_\eta G(\eta_0,\theta_0;(x_i,x_j))\\
	=&\frac{1}{N\sqrt{N}}\sum_{0\leq i, j\leq N-1}\partial_\eta G(\eta_0,\theta_0;(x_i,x_j)) +\sqrt{N}(\hat{\eta}-\eta_0)\mathbb{E}\left[\partial^2_\eta G(\eta_0,\theta_0;(x_1,x_2))\right]\\
	&+ 	\sqrt{N}(\hat{\theta}-\theta)\mathbb{E}\left[\partial_\theta\partial_\eta G(\eta_0,\theta_0;(x_1,x_2))\right]+o_p(1) .
	\end{align*}
 Similarly, we have
	$$\tilde{\eta}
	=\arg\max_\eta
	\sum_{1\leq i,j\leq N}G(\eta,\theta_0;(x_i,x_j)),
	$$
	and
	$0 = \frac{1}{N\sqrt{N}}\sum_{0\leq i, j\leq N-1}\partial_\eta G(\eta_0,\theta_0;(x_i,x_j))+\sqrt{N}(\tilde{\eta}-\eta_0)\mathbb{E}\left[\partial^2_\eta G(\eta_0,\theta_0;(x_1,x_2))\right] + o_p(1).$
	Define 
	$S(\theta;(x_i,x_j))=\log(\pi(\theta;s_i,a_i))+\log(\pi(\theta;s_j,a_j)).$
	According to our estimation method, 
	$$\hat{\theta}=\arg\max_\theta \sum_{0\leq i, j\leq N-1} S(\theta;(x_i,x_j)).$$
	Therefore, 
	$0 = \frac{1}{N\sqrt{N}}\sum_{0\leq i, j\leq N-1} \partial_\theta S(\theta_0;(x_i,x_j))+\sqrt{N}(\hat{\theta}-\theta_0)\mathbb{E}\left[ \partial^2_\theta S(\theta_0;(x_1,x_2))\right]+o_p(1). $
	Following the proof of Theorem 1 of (Henmin et al. 2007), it suffices to prove that
	\begin{equation}\label{eq: proof1}
	\mathbb{E}\left[\partial _\theta\partial_\eta G(\eta_0,\theta_0;(x_1,x_2))\right]= \mathbb{E}\left[-\partial_\eta G(\eta_0,\theta_0;(x_1,x_2))\partial_\theta S(\theta_0;(x_1,x_2))\right].
	\end{equation}
	
	One can check
	\begin{align*}
	&\mathbb{E}\left[\partial_\eta G(\eta_0,\theta_0;(x_1,x_2))\right]\\
	=&\mathbb{E}\left[k(s_1',s_2')\left[\left(\partial_\eta \omega(\eta_0;s_1)\frac{\pi(a_1|s_1)}{\pi(\theta_0;a_1,s_1)}-\partial_\eta\omega(\eta_0;s'_1)\right)\left(\omega(\eta_0;s_2)\frac{\pi(a_2|s_2)}{\pi(\theta_0;a_2,s_2)}-\omega(\eta_0;s'_2)\right)\right.\right.\\
	&+\left.\left. \left(\partial_\eta \omega(\eta;s_2)\frac{\pi(a_2|s_2)}{\pi(\theta_0;a_2,s_2)}-\partial_\eta\omega(\eta_0;s'_2)\right)\left(\omega(\eta_0;s_1)\frac{\pi(a_1|s_1)}{\pi(\theta_0;a_1,s_1)}-\omega(\eta_0;s'_1)\right)\right]\right]\\
	= &\mathbb{E}\left[\left(k(s_1',s_2')+k(s_2',s_1')\right)\left(\partial_\eta \omega(\eta_0;s_1)\frac{\pi(a_1|s_1)}{\pi(\theta_0;a_1,s_1)}-\partial_\eta\omega(\eta_0;s'_1)\right)
	\left(\omega(\eta_0;s_2)\frac{\pi(a_2|s_2)}{\pi(\theta_0;a_2,s_2)}-\omega(\eta_0;s'_2)\right)\right].
	\end{align*}
	The last equality holds because $(x_1,x_2)\sim d_{\pi_0}(s_1)\pi(x_1;\eta_0)\otimes  d_{\pi_0}(s_2)\pi(x_2;\eta_0)$. Besides, we have
	$$
	\partial_\theta S(\theta;(x_1,x_2)) = \frac{\partial_\theta\pi(\theta;a_1,s_1)}{\pi(\theta;a_1,s_1)^2}+\frac{\partial_\theta\pi(\theta;a_2,s_2)}{\pi(\theta;a_2,s_2)^2}.$$
	Then, we derive
	\begin{align*}
	&\mathbb{E}\left[\partial _\theta\partial_\eta G(\eta_0,\theta;(x_1,x_2))\right]\\
	=&\mathbb{E}\left[\left(k(s'_1,s'_2)+k(s'_2,s'_1)\right) \left[-\partial_\eta \omega(\eta_0;s_1)\frac{\pi'(a_1|s_1)}{\pi(\theta_0; a_1,s_1)^2}\left(\omega(\eta_0;s_2)\frac{\pi(a_2|s_2)}{\pi(\theta_0;a_2,s_2)}-\omega(\eta_0;s'_2)\right)\right. \right.\\
	&\left.\left. -\omega(\eta_0;s_2)\frac{\pi'(a_2|s_2)}{\pi(\theta_0,a_2,s_2)^2}\left(\partial_\eta \omega(\eta_0;s_1)\frac{\pi(a_1|s_1)}{\pi(\theta_0;a_1,s_1)}-\partial_\eta\omega(\eta_0;s'_1)\right)\right]\right]\\
	=& \mathbb{E}\left[\left(k(s'_1,s'_2)+k(s'_2,s'_1)\right) \left[-\left(\partial_\eta \omega(\eta_0;s_1)\frac{\pi'(a_1|s_1)}{\pi(\theta_0;a_1,s_1)^2}-\frac{\partial_\eta \omega(\eta_0;s'_1)}{\pi(\theta_0;a_1,s_1)}\right)\left(\omega(\eta_0;s_2)\frac{\pi(a_2|s_2)}{\pi(\theta_0;a_2,s_2)}-\omega(\eta_0;s'_2)\right)\right.\right. \\
	&\left.\left. -\left(\omega(\eta_0;s_2)\frac{\pi'(a_2|s_2)}{\pi(\theta_0;a_2,s_2)^2}-\frac{\omega(\eta_0;s_j')}{\pi(\theta_0;a_2,s_2)}\right)\left(\partial_\eta \omega(\eta_0;s_1)\frac{\pi(a_1|s_1)}{\pi(\theta_0;a_1,s_1)}-\partial_\eta\omega(\eta_0;s'_1)\right)\right]\right]\\
	& -\mathbb{E}\left[\left(k(s'_1,s'_2)+k(s'_2,s'_1)\right)
	\left[\frac{\partial_\eta \omega(\eta_0;s'_1)}{\pi(\theta_0;a_1,s_1)}\left(\omega(\eta_0;s_2)\frac{\pi(a_1|s_1)}{\pi(\theta_0;a_2,s_2)}-\omega(\eta_0;s'_2)\right)\right.\right.\\ 
	&\left.\left.+ \frac{\omega(\eta_0;s'_2)}{\pi(\theta_0;a_2,s_2)}\left(\partial_\eta \omega(\eta_0;s_1)\frac{\pi(a_1|s_1)}{\pi(\theta_0;a_1,s_1)}-\partial_\eta\omega(\eta_0;s'_1)\right) \right]\right]\\
	\triangleq & \mathbb{E}\left[-\partial_\eta G(\eta_0,\theta_0;(x_1,x_2))\partial_\theta S(\theta_0;(x_1,x_2))\right] + \mathbb{E}\left[H(\eta_0,\theta_0;(x_1,x_2))\right].
	\end{align*}
	Here, we define
	\begin{align*}
	H(\eta_0,\theta_0,(x_1,x_2))= ~&\frac{\partial_\eta \omega(\eta_0;s'_1)}{\pi(\theta_0;a_1,s_1)}\left(\omega(\eta_0;s_2)\frac{\pi(a_2|s_2)}{\pi(\theta_0;a_2,s_2)}-\omega(\eta_0,s'_2)\right) \\ 
	&+ \frac{\omega(\eta_0;s'_2)}{\pi(\theta_0;a_2,s_2)}\left(\partial_\eta \omega(\eta_0;s_i)\frac{\pi(a_1|s_1)}{\pi(\theta_0;a_1,s_1)}-\partial_\eta\omega(\eta_0;s'_1)\right).
	\end{align*}
	Note that 
	\begin{align*}
	&\mathbb{E}\left[\left(\omega(\eta_0;s_2)\frac{\pi(a_2|s_2)}{\pi(\theta_0,a_2,s_2)}-\omega(\eta_0,s'_2)\right)|a_1,s_1,s_1',s'_2\right]=0,\\
	&\mathbb{E}\left[\left(\partial_\eta \omega(\eta_0;s_1)\frac{\pi(a_1|s_1)}{\pi(
		\theta_0;a_1,s_1)}-\partial_\eta\omega(\eta_0;s'_1)\right)| a_2,s_2,s_2'\right]= 0.
	\end{align*}
	Therefore $\mathbb{E}[H(\eta_0,\theta_0;(x_1,x_2))]=0$. So we obtain (\ref{eq: proof1}). 
\end{proof}
\subsection{Proof of Corollary \ref{crll: error bound}}
\begin{proof}
In the prove of Theorem \ref{thm: BCD estimated}, we see that 
$$	\frac{1}{N^2}\sum_{0\leq i, j\leq N-1}\partial_\eta G(\eta_0,\theta_0;(x_i,x_j)) +K_1(\hat{\eta}-\eta_0)
+ K_2(\hat{\theta}-\theta)=o_p(1).$$
with $K_1=\mathbb{E}\left[\partial^2_\eta G(\eta_0,\theta_0;(x_1,x_2))\right]$ and $K_2=\mathbb{E}\left[\partial_\theta\partial_\eta G(\eta_0,\theta_0;(x_1,x_2))\right]$. Therefore,
$$\mathbb{E}[(\hat{\eta}-\eta_0)^2] \leq 2K_1^{-2}\left(K_2^2\mathbb{E}[(\hat{\theta}-\theta_0)^2]+\mathbb{E}\left[\left(\frac{1}{N^2}\sum_{0\leq i, j\leq N-1}\partial_\eta G(\eta_0,\theta_0;(x_i,x_j))\right)^2\right]\right).$$
We assume that CLT holds for the maximum likelihood estimator $\hat{\theta}$, i.e. $\mathbb{E}[(\hat{\theta}-\theta_0)^2]=O(1/N)$. Besides, as $\mathbb{E}[\partial_\eta G(\eta_0,\theta_0;(x_i,x_j))]=0$, under Condition 4 of Assumption \ref{assmp: G}, , we can apply the central limit theorem (for stationary Markov chain) and have
$$\mathbb{E}\left[\left(\frac{1}{N\sqrt{N}}\sum_{0\leq i, j\leq N-1}\partial_\eta G(\eta_0,\theta_0;(x_i,x_j))\right)^2\right] = O(1).$$
Therefore,
$$\mathbb{E}[(\hat{\eta}-\eta_0)^2] = O(1/N).$$

\end{proof}


\section{Proofs of Propositions for EMP}\label{appdx: multiple}
\begin{proof}[Proof of Proposition \ref{prop: emp reward}]
	\begin{equation*}
	\begin{aligned}
	&\mathbb{E}_{(s,a)\sim d_0}\left[\omega(s)\frac{\pi(a|s)}{\pi_0(a|s)}r(s,a)\right]=\sum_{s,a}\omega(s)\frac{\pi(a|s)}{\pi_0(a|s)}r(s,a)\sum_j w_jd_{\pi_j}(s)\pi_j(a|s)\\
	=&\sum_{s,a}\omega(s)\frac{\pi(a|s)}{\pi_0(a|s)}r(s,a) d_0(s)\pi_0(s)= \sum_{s,a}d_0(s)\omega(s)\pi(a|s)r(s,a) = R_\pi.
	\end{aligned}
	\end{equation*}
\end{proof}
\begin{proof}[Proof of Proposition \ref{prop: emp omega}]
	If $\omega=d_\pi/d_0$, based on the stationary equation 
	$$d_\pi(s')=\sum_{s,a}d_\pi(s)\pi(a|s)T(s'|a,s),\text{ for any }s'\in S,$$ we have 
	\begin{align*}
	\sum_{s'}\omega(s')d_{0}(s')f(s')&=\sum_{s,a}\omega(s)d_0(s)\pi(a|s)T(s'|a,s)f(s')\\
	&= \sum_{j}\sum_{s,a} \omega(s) \frac{\pi(a|s)}{\sum_j w_jd_{\pi}(s)\pi_j(a|s)/d_0} w_jd_{\pi_j}(s)\pi_j(a|s)T(s'|a,s)f(s')\\
	& = \mathbb{E}_{(s,a,s')\sim d_0}\left[\omega(s)\frac{\pi(a|s)}{p_0(a|s)}f(s')\right].
	\end{align*}
	Therefore,
	$$ \mathbb{E}_{(s,a,s')\sim d_0}\left[\left(\omega(s')-\omega(s)\frac{\pi(a|s)}{p_0(a|s)}\right)f(s')\right]=0.$$
	On the opposite way, if 
	$$ \mathbb{E}_{(s,a,s')\sim d_0}\left[\left(\omega(s')-\omega(s)\frac{\pi(a|s)}{p_0(a|s)}\right)f(s')\right]=0, \text{ for all function }f: S\to \mathbb{R},$$
	we should have
	$$d_0(s')\omega(s')=\sum_{s,a}d_0(s)\omega(s)\pi(a|s)T(s'|a,s).$$
	Therefore, $d_0\omega$ satisfy the stationary equation and must equal to $d_\pi$ (up to a constant).
\end{proof}
\begin{proof}[Proof of Proposition \ref{thm: multiple estimated}]
	The proof follows immediately from that of Theorem \ref{thm: BCD estimated}. In particular, assume $\pi_0\in \{\pi(\theta;a,s):\theta\in\mathcal{E}_\theta\}$ and the estimated $\hat{\pi}_0=\pi(\hat{\theta};\cdot)$ is obtained via
	$$\hat{\theta}=\arg\max_\theta \sum_j\sum_{n=0}^{N_j-1} \log(\pi(\theta;s_{j,n},a_{j,n})).$$
	The rest part of the proof follows the same argument in the proof of Theorem \ref{thm: BCD estimated}.
\end{proof}
\section{State-Action Distribution Learning}\label{appdx: sadl}

 Here we propose a behavior-agnostic approach that evaluates the target policy through learning occupation distribution correction instead of stationary distribution correction. Recall that the occupation distribution the stationary distribution of the state-action pair $d_{\pi}(s)\pi(a|s)$. We define occupation distribution correction as $d_{\pi}(s)\pi(a|s)/(d_{\pi_0}(s)\pi_0(a|s))$. Since $\pi(a|s)$ is known, we denote $u(a,s)=d_{\pi}(s)/(d_{\pi_0}(s)\pi_0(a|s))$ and formulate a min-max problem to learn $u$.



Following this notation, Equation (\ref{eq:stationary}) can be written as:
$$
u(s',a')d_{\pi_0}(s',a')= \sum_{s,a}u(s,a)\pi(a|s)d_{\pi_0}(s,a)P(s'|a,s), \quad \forall s',\forall a'
$$
For any test function $f(s',a'): S\times R \to \mathbb{R}$ and any probability density function $\omega(a')$ with respect to $ a' \in A$, the equation above implies that:
$$
\sum_{s',a'}u(s',a')\omega(a')f(s',a')d_{\pi_0}(s',a') = \sum_{s'}\sum_{a'}f(s',a')\omega(a')\sum_{s,a}u(s,a)\pi(a|s)d_{\pi_0}(s,a)P(s'|a,s),  \\
$$
which is equivalent to,


\[
\mathbb{E}_{(s,a,s')\sim d_{\pi_0}}[u(s,a)\omega(a)f(s,a)-u(s,a)\pi(a|s)\mathbb{E}_{a'\sim \omega}[f(s',a')]\big] =0, \quad \forall f.
\]

This suggest the following mini-max problem can be used to estimate $u(s,a)$
\begin{gather*}
\min \limits_{u}\{D(u):=\max \limits_{f\in\mathcal{F}}L(u,f)^2\},\\
\mathrm{where}\ \  L(u,f):=\mathbb{E}_{(s,a,s')\sim d_{\pi_0}}\big[u(s,a)\omega(a)f(s,a)-u(s,a)\pi(a|s)\mathbb{E}_{a'\sim \omega}[f(s',a')]\big].
\end{gather*}

We simplify this mini-max problem into a minimization problem using the kernel method. Theorem \ref{scen2_RKHS} gives the closed form representation of $D(u)$ when $\mathcal{F}$ is a unit ball in a RKHS with kernel $k$.

\begin{thm}\label{scen2_RKHS}
Assume $\mathcal{H}$ is a RKHS of functions $f(s,a)$ with a positive definite kernel $k((s,a),(s',a'))$, and define $\mathcal{F}:=\{f\in \mathcal{H}: ||f||_{\mathcal{H}}\leq 1\}$ to be the unit ball of $\mathcal{H}$. We have $\max \limits_{f\in\mathcal{F}}L(u,f)^2=$
$$
\begin{aligned}\label{scen2_RKHS}
        &=\mathbb{E}_{(s,a,s')\sim d_{\pi_0}, a' \sim \omega,(\bar s,\bar a,\bar s')\sim d_{\pi_0}, \bar a' \sim \omega}[u(s,a)u(\bar s, \bar a)\omega(a)\omega(\bar a)k((s,a),(\bar s, \bar a))\\
       &+u(s,a)u(\bar s, \bar a)\pi(a|s)\pi(\bar a|\bar s)k((s',a'),(\bar s', \bar a'))\\
       &-u(s,a)u(\bar s, \bar a)\omega(a)\pi(\bar a|\bar s)k((s,a),(\bar s', \bar a'))-u(s,a)u(\bar s, \bar a)\omega(\bar a)\pi( a| s)k((\bar s,\bar a), (s',a'))].  
\end{aligned}
$$
\end{thm}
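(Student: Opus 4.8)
The plan is to follow the kernel-method template recalled in Appendix~\ref{appdx: kernel} (and established in BCH, \cite{liumethod}): use the reproducing property to express $L(u,f)$ as an inner product against a fixed element of $\mathcal{H}$, and then invoke the fact that the supremum of a bounded linear functional over the unit ball equals the squared norm of its Riesz representer.

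First I would fix $u$ and note that $f \mapsto L(u,f)$ is linear in $f$. Applying the reproducing property $f(s,a) = \langle f, k((s,a),\cdot)\rangle_{\mathcal{H}}$ and, for the inner expectation, $\mathbb{E}_{a'\sim\omega}[f(s',a')] = \langle f, \mathbb{E}_{a'\sim\omega}[k((s',a'),\cdot)]\rangle_{\mathcal{H}}$, then pulling the outer expectation over $(s,a,s')\sim d_{\pi_0}$ through the inner product, I obtain
\[
L(u,f) = \langle f, \xi_u \rangle_{\mathcal{H}}, \qquad \xi_u := \mathbb{E}_{(s,a,s')\sim d_{\pi_0},\, a'\sim\omega}\big[\, u(s,a)\omega(a)\, k((s,a),\cdot) - u(s,a)\pi(a|s)\, k((s',a'),\cdot)\,\big].
\]
To make this rigorous one must check that $\xi_u$ is a well-defined element of $\mathcal{H}$ (a Bochner integral) and that expectation commutes with $\langle f,\cdot\rangle_{\mathcal{H}}$; this follows from the standing boundedness of $u,\omega,\pi$ and $k$ together with a mild integrability condition such as $\mathbb{E}[u(s,a)^2\,k((s,a),(s,a))]<\infty$ and the analogous bound for the $(s',a')$ term — the same kind of regularity implicitly used in the BCH derivation.

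Second, by Cauchy–Schwarz, $\max_{\|f\|_{\mathcal{H}}\le 1} L(u,f)^2 = \max_{\|f\|_{\mathcal{H}}\le 1}\langle f,\xi_u\rangle_{\mathcal{H}}^2 = \|\xi_u\|_{\mathcal{H}}^2$, the maximizer being $f=\xi_u/\|\xi_u\|_{\mathcal{H}}$ (and any $f$ when $\xi_u=0$). Third, I would expand $\|\xi_u\|_{\mathcal{H}}^2 = \langle\xi_u,\xi_u\rangle_{\mathcal{H}}$ by bilinearity after introducing an independent copy $(\bar s,\bar a,\bar s')\sim d_{\pi_0}$, $\bar a'\sim\omega$, and apply the kernel identity $\langle k(z,\cdot),k(\bar z,\cdot)\rangle_{\mathcal{H}} = k(z,\bar z)$. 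Writing $\xi_u = \mathbb{E}[A - B]$ with $A$ the $\omega$-weighted term and $B$ the $\pi$-weighted term, the four pieces $\langle A,A\rangle$, $-\langle A,B\rangle$, $-\langle B,A\rangle$, $\langle B,B\rangle$ produce, respectively, the $\omega(a)\omega(\bar a)\,k((s,a),(\bar s,\bar a))$ term, the two mixed summands $-\omega(a)\pi(\bar a|\bar s)\,k((s,a),(\bar s',\bar a'))$ and $-\omega(\bar a)\pi(a|s)\,k((\bar s,\bar a),(s',a'))$, and the $\pi(a|s)\pi(\bar a|\bar s)\,k((s',a'),(\bar s',\bar a'))$ term, each carrying the factor $u(s,a)u(\bar s,\bar a)$; this is exactly the displayed identity.

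The only genuine obstacle is the technical justification in the first step that the $\mathcal{H}$-valued expectation defining $\xi_u$ exists and interchanges with the inner product; under the integrability hypotheses above (which should be folded into the statement of the theorem) this is standard. Everything after that is bookkeeping with bilinearity of $\langle\cdot,\cdot\rangle_{\mathcal{H}}$ and the reproducing kernel property, exactly parallel to the BCH computation for the state-correction case.
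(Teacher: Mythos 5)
Your proof is correct and follows exactly the argument the paper relies on: the paper states this theorem without proof, deferring to the same RKHS reproducing-property computation it recalls for the BCH objective in Appendix~\ref{appdx: kernel}, and your Riesz-representer argument (maximizing $\langle f,\xi_u\rangle_{\mathcal{H}}^2$ over the unit ball to get $\|\xi_u\|_{\mathcal{H}}^2$, then expanding with an independent copy) reproduces precisely the four displayed kernel terms. The only thing you add beyond what the paper leaves implicit is the integrability condition ensuring the $\mathcal{H}$-valued expectation $\xi_u$ is a well-defined Bochner integral, which is a sensible technical remark rather than a deviation in approach.
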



\section{Experimental Details}
\subsection{Environment Description}\label{app:env}
\textbf{Taxi}  Taxi~\cite{dietterich2000hierarchical} is a 5 $\times$ 5 grid world simulating a taxi movement. Six actions are contained in Taxi: moves North, East, South, West, pick up and drop off a passenger. A reward of 20 is received when it picks up a passenger or drops her/he off at the right place, and a reward of -1 for each time step. The passengers are allow to randomly appear and disappear at every corner of the map at each time step. The 5 $\times$ 5 grid size yields 2000 states in total (25 $\times$ 24 $\times$ 5, corresponding to 25 taxi locations, 24 passenger appearance status and 5 taxi status (empty or with one of 4 destinations)).

\textbf{Gridworld}
 Gridworld~\cite{thomas2016data} is a 4 $\times$ 4 grid world which including one reward state, one terminate state and one fire state and thirteen normal state. Four action can be taken in this environment: up, down, left and right. A reward of -1 will be received while the agent in normal states, 1 reward is obtained in reward state, 100 reward is got in terminate state and -11 reward will got in fire state. 

\textbf{SinglePath}
This environment has 5 states, 2 actions. The agent begins in state 0 and both actions either take the agent from state n to state n + 1 or cause the agent to remain in state n. If the agent arrives at a new state, it will receive a +1 reward, otherwise it will get a 1 reward.

\textbf{Pendulum}
Pendulum has a continuous state space of $\mathcal{R}^3$ which describes the triangle of and a action space of $[-2,2]$.

\subsection{Behavior Policy Estimation}\label{appdix:esti}
EMP employs the maximum likelihood method as in \citet{Tirinzoni2019} to estimate the mixed policy $\hat{\pi}_0$ as
\begin{equation}
    \hat{\pi}_0 = \argmax_{\pi\in \Pi}\sum_{j=1}^m\sum_{n=1}^{N_j}\log \pi(a_n|s_n)
\end{equation}
As for discrete control tasks, the optimal $\hat{\pi}_0$ coincides with the count-frequency.

\subsection{Computation of KL-Weights}\label{append:kl}
In an ad-hoc way, we optimize the weights $w_j$ according to the KL-divergence between the behavior policies and the target policy, which can be estimated directly from the data. For finite-state space, we propose to choose 
\begin{equation}\label{eq: KL weight}
\begin{aligned}
    w^{KL}_j & = \frac{\sum_{s\in S} \mathbf{1}(j=\arg\min_k D_{KL}(\pi(\cdot|s)||\pi_{k}(\cdot|s)))}{\sum_{i=1}^m \sum_{s\in S} \mathbf{1}(i=\arg\min_k D_{KL}(\pi(\cdot|s)||\pi_{k}(\cdot|s)))} \\& = \frac{\sum_{s\in S} \mathbf{1}(j=\arg\min_k D_{KL}(\pi(\cdot|s)||\pi_{k}(\cdot|s)))}{|S|}
\end{aligned}
\end{equation}
To implement this method for infinite- or continuous-state space in the numerical experiments, we replace the set of all possible states $S$ in (\ref{eq: KL weight}) with the set of all states that has been visited in the data buffer.
The numerical results show that using the KL weights $\{w^{KL}_j\}$ could achieve smaller MSE compared to using $\{w_j\}$ as given by the data sample. We believe this approach deserves more careful analysis in future research studies.
\subsection{ Additional Experiment Results }\label{appdix:known_unknown}
BCH, EMP and KL-EMP have both policy-aware and partially policy-agnostic versions in multiple behavior policies. In policy-aware BCH we first apply BCH for each subgroup of samples generated by each behavior policy followed by output their average value. As for partially policy-agnostic BCH, it is equal to EMP (single).

The policy-aware version of EMP is named as BCH (pooled).
In BCH (pooled), the corresponding min-max problem formation is
\begin{equation}\label{eq: multiple known}
\min_\omega\max_f ~\mathbb{E}_{(j,s,a,s')\sim \mathcal{D}}\left[\left(\omega(s')-\omega(s)\frac{\pi(a|s)}{\pi_j(a|s)}\right)f(s')\right].
\end{equation}
They both pool the data from different behavior policies together and the main difference is that BCH (pooled) uses the exact behavior policies. 

The policy-aware version of KL-EMP is called BCH (KL-polled). The main difference between BCH (KL-polled) and BCH (polled) is that BCH (KL-polled) utilizes KL-divergence to calculate the weights.


The results of the two versions are shown in Figure~\ref{fig:known_unknown_mis}.
We observe that the partially policy-agnostic version OPPE consistently outperform the policy-aware version OPPE.
\begin{figure*}
	\centering
	\hspace{-1.em}
	\subfigure[Taxi]{%
		\includegraphics[ width=1.24in]{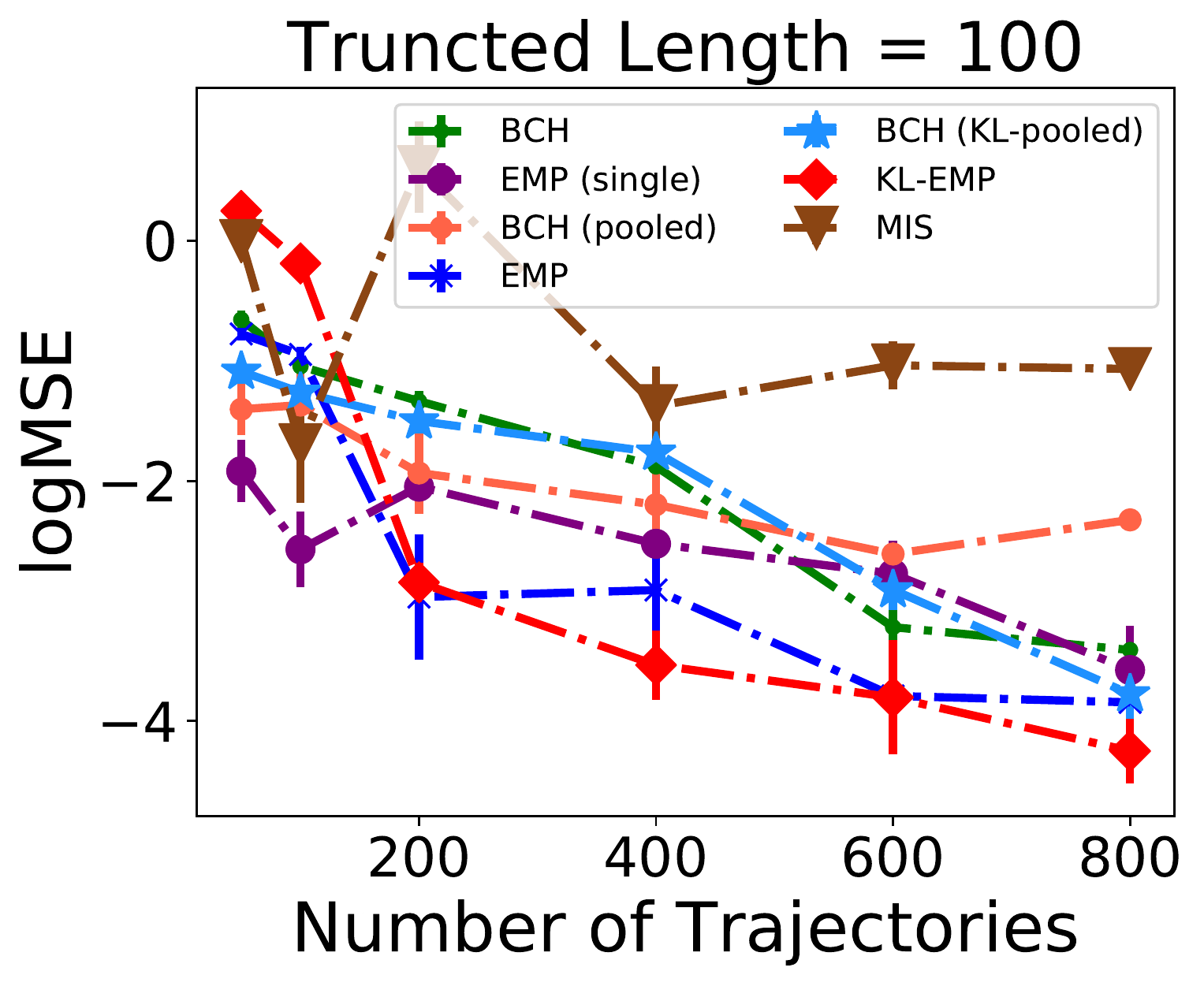}
		\label{fig:subfigure5}
		\includegraphics[ width=1.24in]{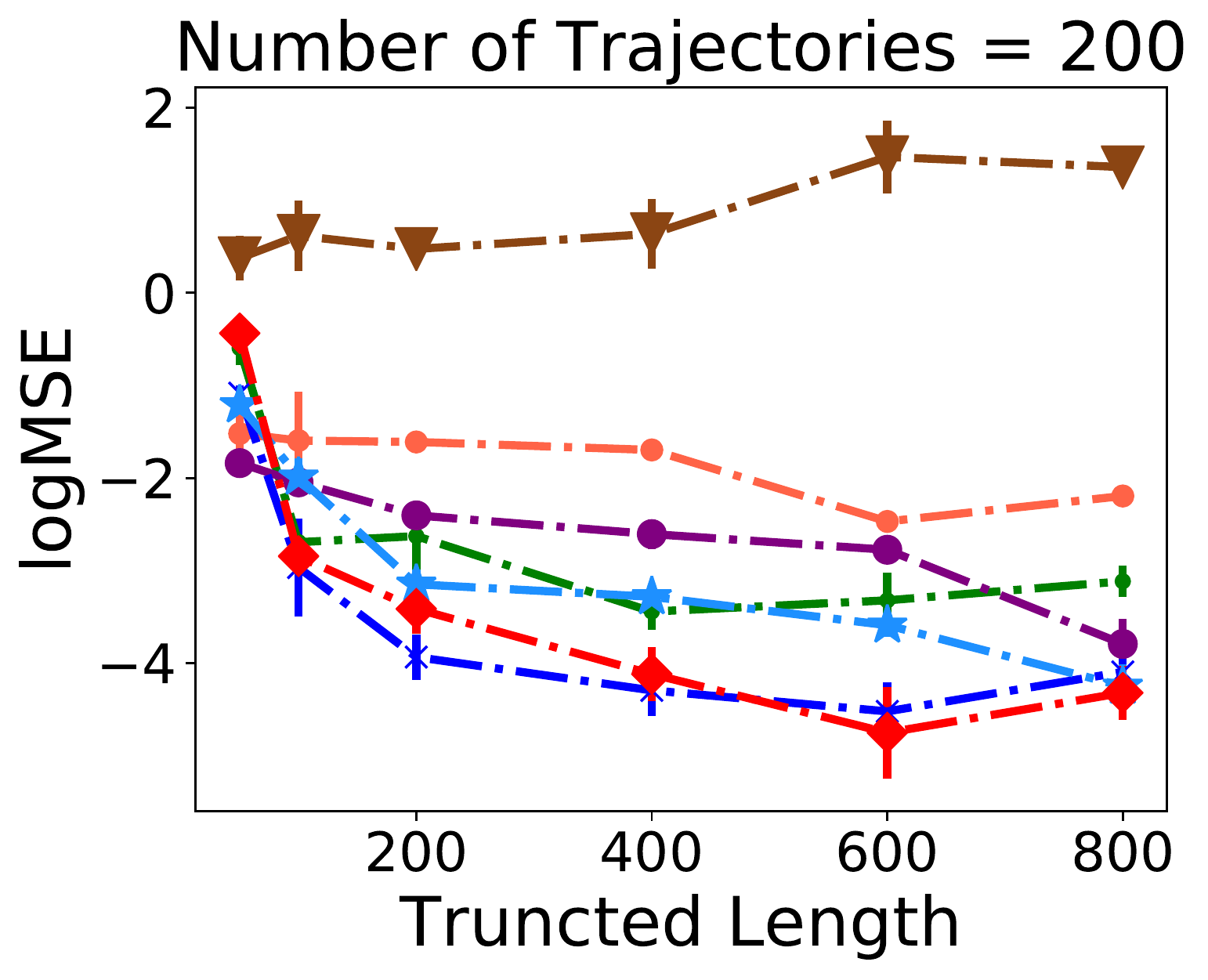}
		\label{fig:subfigure5}}
~
	\subfigure[Singlepath]{%
		\includegraphics[width=1.28in]{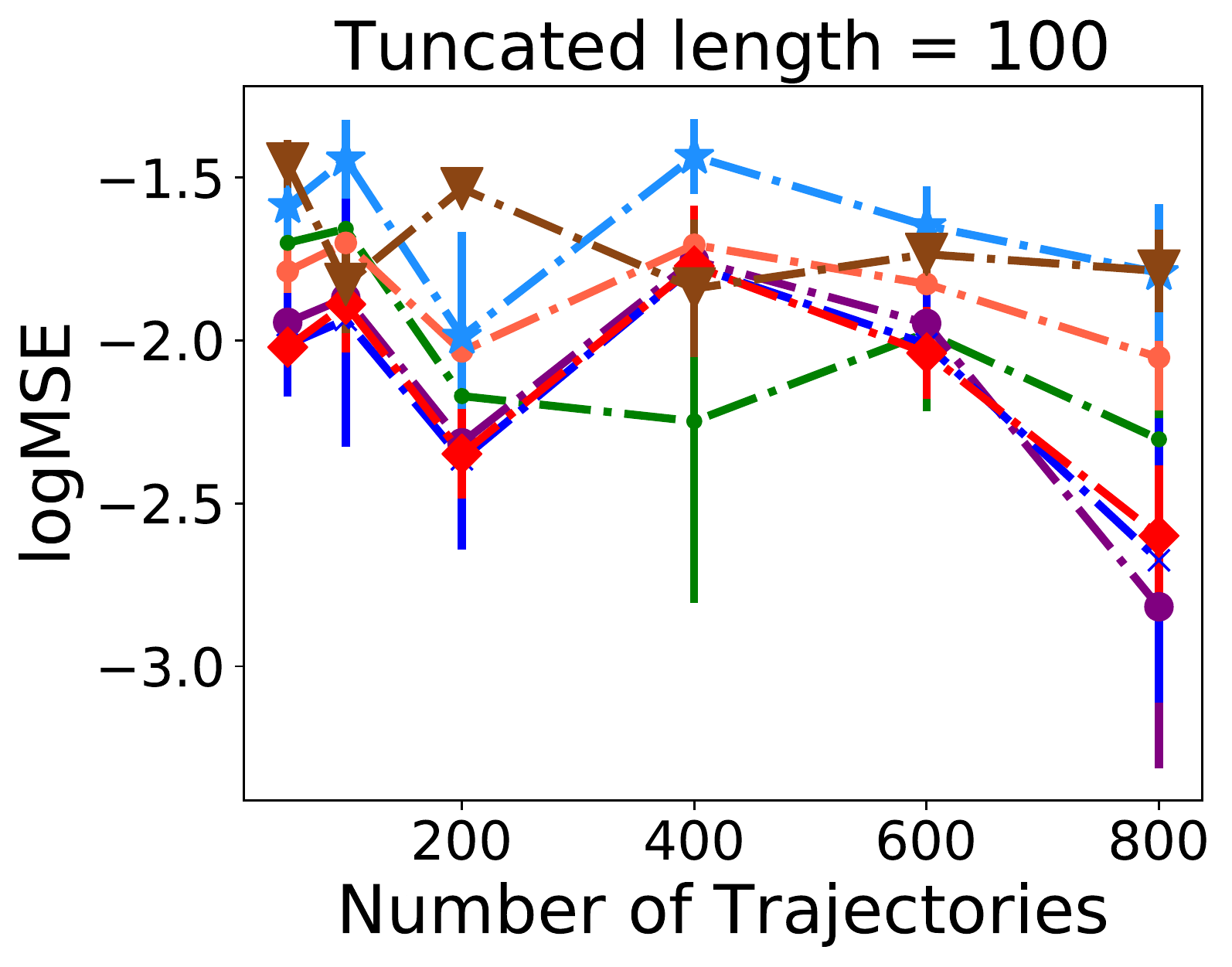}
		\label{fig:subf1}
		\includegraphics[ width=1.28in]{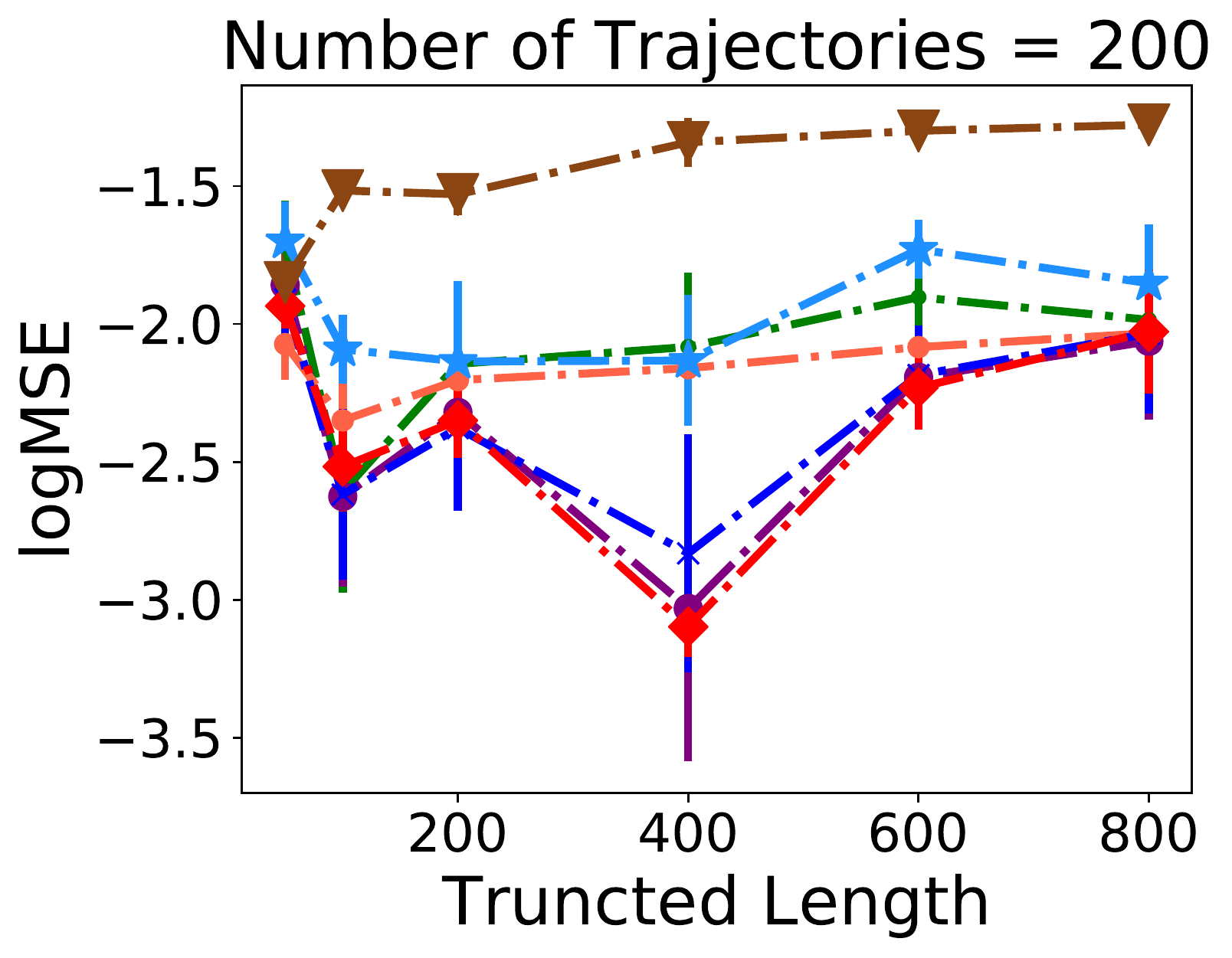}
		\label{fig:subfigure3}}

	\vspace{-.12in}
	\subfigure[Gridworld]{%
		\includegraphics[ width=1.2in]{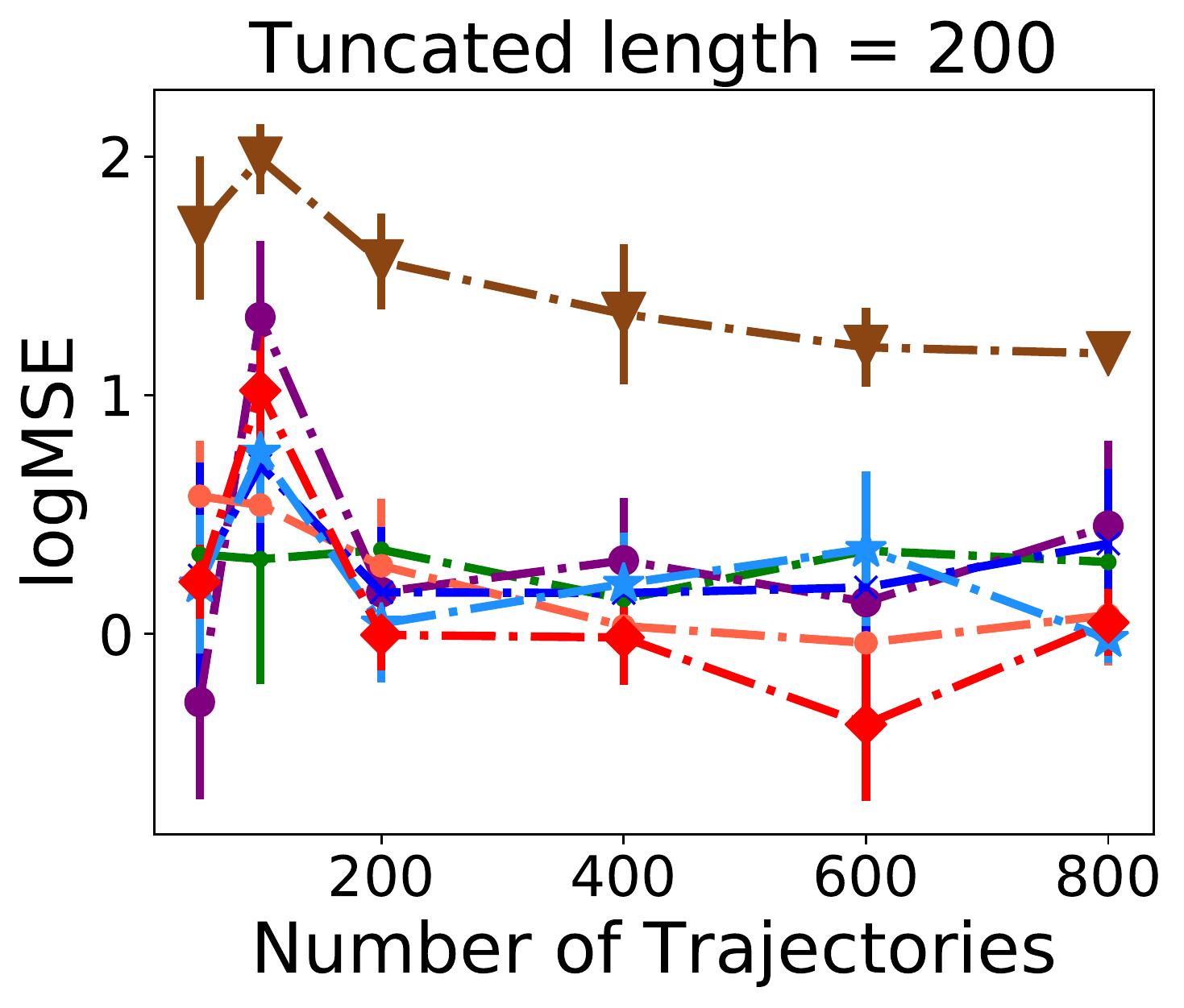}
		\label{fig:subfigure6}
		\includegraphics[ width=1.2in]{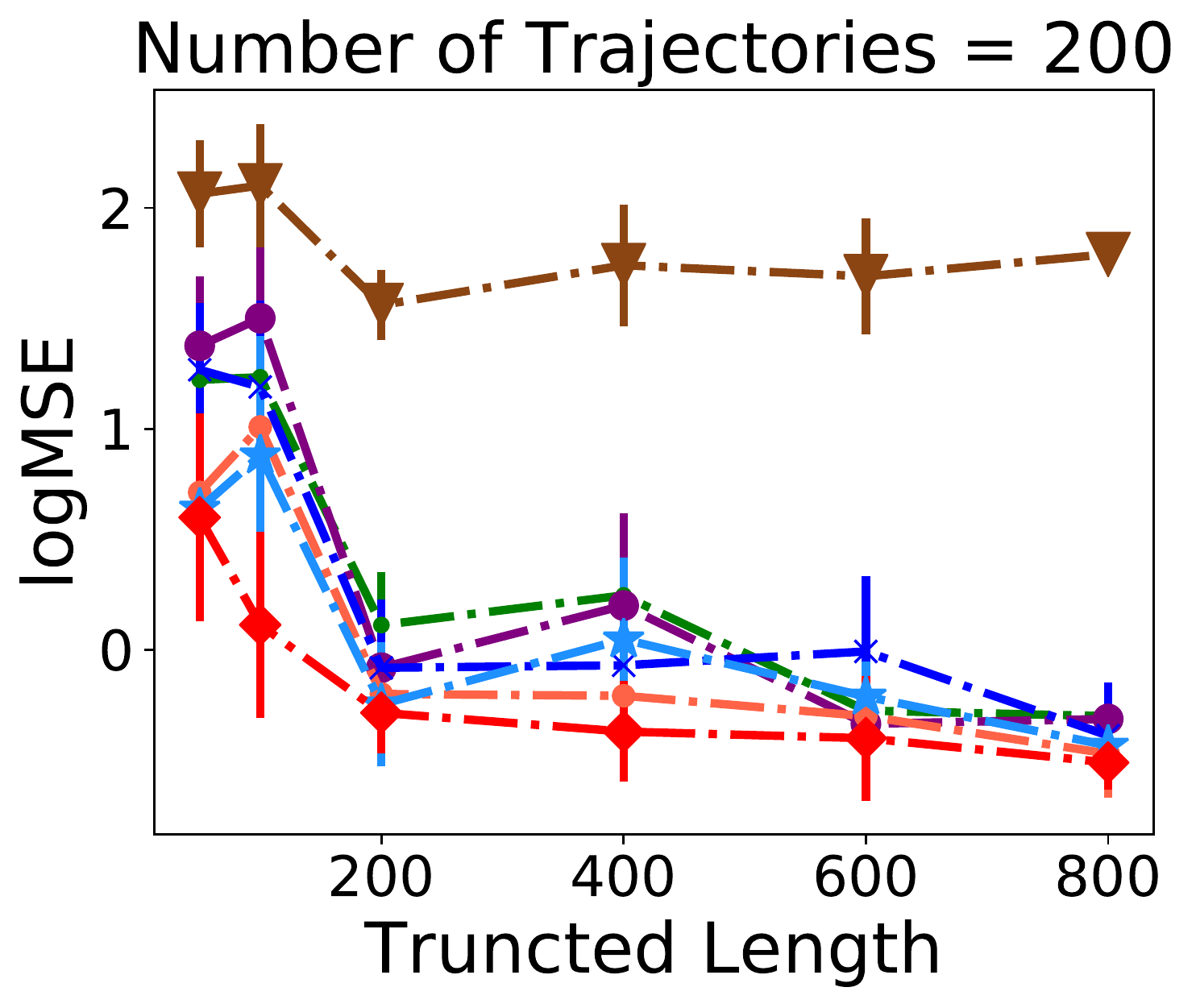}
		\label{fig:subfigure6}}
~
	\subfigure[Pendulum]{%
		\includegraphics[width=1.24in]{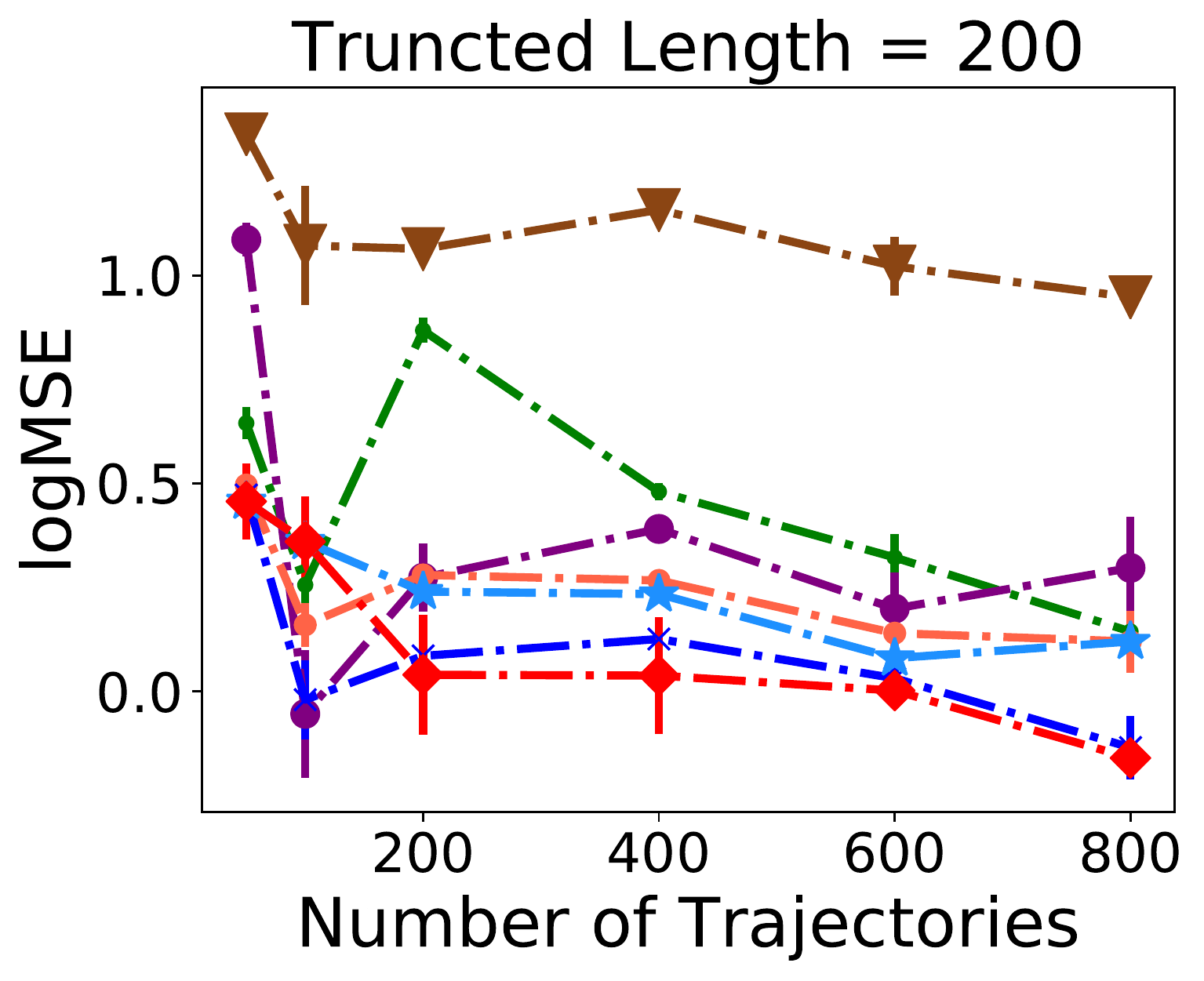}
		\label{fig:subfigure2}
		\includegraphics[width=1.3in]{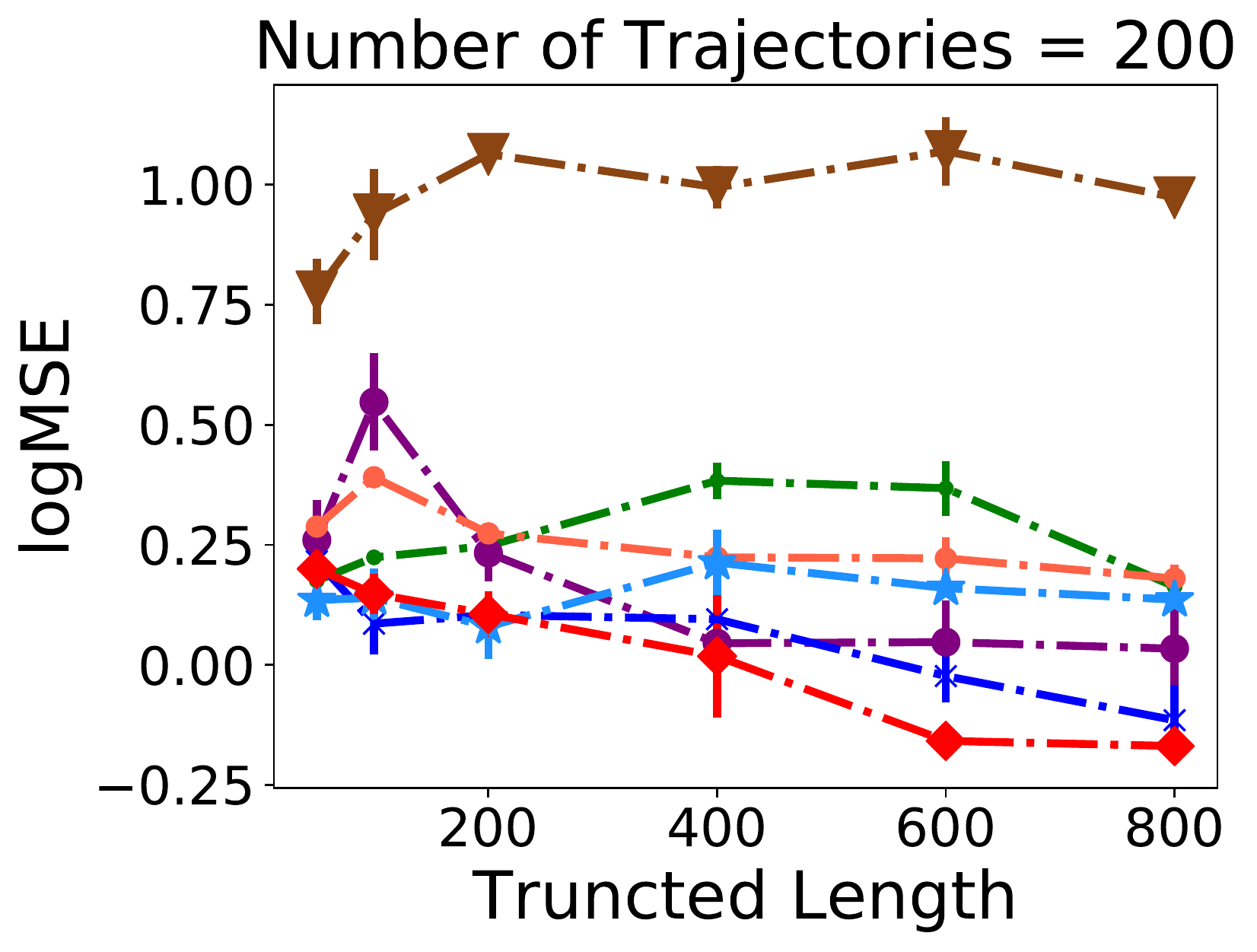}
		\label{fig:subfigure4}}
~

\caption{\label{fig:known_unknown_mis} Results of policy-aware OPPE methods (BCH, BCH (pooled) and BCH (KL-pooled)) and their corresponding partially policy-agnostic version (EMP (single), EMP and KL-EMP ) across continuous and discrete environments with average reward. }
\end{figure*}
\end{document}